\documentclass{article}

\usepackage{geometry}
\geometry{top=1in,bottom=1in,left=1in,right=1in}

\usepackage{microtype}
\usepackage{graphicx}
\usepackage{subfigure}
\usepackage{booktabs} 
\usepackage{url}
\usepackage{amsmath,amsthm,amssymb,amsfonts} 
\usepackage{nicefrac}
\usepackage{mathtools}
\usepackage{makecell}
\mathtoolsset{showonlyrefs}
\usepackage{enumitem}
\usepackage{wrapfig}
\usepackage{multirow}
\usepackage{hhline}
\usepackage{algorithm}
\usepackage{algorithmic}
\usepackage{balance}

\newcommand{\bta}{\boldsymbol{\eta}}

\newcommand{\bm}{\boldsymbol{m}}
\newcommand{\bg}{\boldsymbol{g}}
\newcommand{\bx}{\boldsymbol{x}}

\newcommand{\bu}{\boldsymbol{u}}
\newcommand{\bh}{\boldsymbol{h}}
\newcommand{\by}{\boldsymbol{y}}

\newcommand{\bz}{\boldsymbol{z}}

\newcommand{\bw}{\boldsymbol{w}}

\newcommand{\bgamma}{\boldsymbol{\gamma}}

\newcommand{\argmin}{\mathop{\text{argmin}}}

\newcommand{\field}[1]{\mathbb{#1}}

\newcommand{\R}{\field{R}}

\newcommand{\E}{\field{E}}

\newtheorem{lemma}{Lemma}
\newtheorem{theorem}{Theorem}
\newtheorem{cor}{Corollary}
\newtheorem{claim}{Claim}

\usepackage{natbib}
\usepackage[pdftex,bookmarksnumbered,bookmarksopen,
colorlinks,citecolor=blue,linkcolor=blue,urlcolor=blue]{hyperref}

\title{On the Last Iterate Convergence of Momentum Methods}
\author{
Xiaoyu Li\\ 
Division of System Engineering, Boston University, Boston, MA 02215, USA\\
\small{xiaoyuli@bu.edu}
\and
Mingrui Liu\\
Department of Computer Science, George Mason University, Fairfax, VA 22030, USA\\
\small{mingruil@gmu.edu}
\and
Francesco Orabona\\
Electrical \& Computer Engineering, Boston University, Boston, MA 02215, USA\\
\small{francesco@orabona.com}
}

\defcitealias{JainNN19}{Jain et al. (COLT'19)}

\begin{document}

\maketitle

\maketitle

\begin{abstract}%
SGD with Momentum (SGDM) is a widely used family of algorithms for large-scale optimization of machine learning problems. Yet, when optimizing generic convex functions, no advantage is known for any SGDM algorithm over plain SGD. Moreover, even the most recent results require changes to the SGDM algorithms, like averaging of the iterates and a projection onto a bounded domain, which are rarely used in practice.
In this paper, we focus on the convergence rate of the last iterate of SGDM. For the first time, we prove that for any constant momentum factor, there exists a Lipschitz and convex function for which the last iterate of SGDM suffers from a suboptimal convergence rate of $\Omega(\frac{\ln T}{\sqrt{T}})$ after $T$ iterations. Based on this fact, we study a class of (both adaptive and non-adaptive) Follow-The-Regularized-Leader-based SGDM algorithms with \emph{increasing momentum} and \emph{shrinking updates}. For these algorithms, we show that the last iterate has optimal convergence $O(\frac{1}{\sqrt{T}})$ for unconstrained convex stochastic optimization problems without projections onto bounded domains nor knowledge of $T$. Further, we show a variety of results for FTRL-based SGDM when used with adaptive stepsizes. Empirical results are shown as well.
\end{abstract}

\section{Introduction}

Momentum methods have become one of the most used first-order optimization algorithms in machine learning applications. When momentum is used together with Stochastic Gradient Descent (SGD), there are two main variants considered in the literature: the stochastic version of the Heavy Ball momentum (SHB)~\citep{Polyak64} and Nesterov's momentum (also called Nesterov Accelerate Gradient method)~\citep{Nesterov83}. Besides these two, there are other variations as well. For example, an exponential moving average of the (stochastic) gradients can be used to replace the gradients in the updates~\citep{KingmaB15, ReddiHSPS16, AlacaogluMMC20, LiuGY20}.

Despite this zoo of variants, due to the presence of noise, it is well-known that Stochastic Gradient Descent with Momentum (SGDM) \emph{does not} guarantee an accelerated rate of convergence of noise nor any real advantage over plain SGD on generic convex problems. For example, recent works have proved that a variant of SGD with momentum improves only the non-dominant terms in the convergence rate on some specific stochastic problems~\citep{DieuleveutFB17,JainKKNS18}. 
Moreover, often an idealized version of SGDM is used in the theoretical analysis rather than the actual SGDM people use in practice.
For example, projections onto bounded domains at each step, averaging of the iterates~\citep[e.g.,][]{AlacaogluMMC20}, and knowledge of the total number of iterations~\citep{GhadimiL12} are often assumed. The mismatch between theory and practice is concerning because, for example, it is known that in some cases the lack of projections can destroy the convergence of some algorithms~\citep{OrabonaP18}.
Overall, recent analyses seem unable to pinpoint any advantage of using a momentum term in SGD in the stochastic optimization of general convex functions.

In the following, we denote by SGDM the following updates
\begin{equation}
\label{eq:sgdm}
\bx_{t+1} = \bx_t - \eta_t \bm_t, \quad  \bm_t = \beta_t \bm_{t-1} + (1- \beta_t)\bg_t,
\end{equation}
where $0 \leq \beta_t\leq 1$.

In this paper, to show a discriminant difference between SGD and SGDM, we focus on the convergence of the last iterate. Hence, we study the convergence of the \emph{last iterate} of SGDM for unconstrained optimization of convex functions. Unfortunately, our first result is a negative one: We show that the last iterate of SGDM can have a suboptimal convergence rate for \emph{any constant momentum setting}.

Hence, motivated by the above result, we analyze yet another variant of SGDM.
We start from the very recent observation~\citep{Defazio20} that SGDM can be seen as a primal averaging procedure~\citep{NesterovS15,TaoPWT18,Cutkosky19} applied to the iterates of Online Mirror Descent (OMD)~\citep{NemirovskyY83,Warmuth97}. Based on this fact, we analyze SGDM algorithms based on the Follow-the-Regularized-Leader (FTRL) framework\footnote{FTRL is known in the offline optimization literature as Dual Averaging (DA)~\citep{Nesterov09}, but in reality, DA is a special case of FTRL when the functions are linearized.}~\citep{Shalev-Shwartz07,AbernethyHR08} and the primal averaging. The use of FTRL instead of OMD removes the necessity of projections onto bounded domains, while the primal averaging acts as a momentum term and guarantees the optimal convergence of the last iterate. The resulting algorithm has an \emph{increasing momentum} and \emph{shrinking updates} that precisely allow to avoid our lower bound.

More in detail, we prove that the expected suboptimality gap of the last iterate of FTRL-based SGDM converges at the optimal rate of $O(1/\sqrt{T})$ on convex functions, without assuming bounded domains nor the knowledge of the total number of iterations. This also disproves a more general conjecture than the one in \citep{JainNN19,JainNN21}, removing the bounded assumption. Moreover, we show that our construction is general enough to allow for an entire family of FTRL-based SGDM methods, both adaptive and non-adaptive. For example, we show that ``adaptive'' learning rates give rise to convergence rates that are adaptive to gradients, noise, and to the interpolation regime.

The rest of the paper is organized as follows: We discuss the related work in Section~\ref{sec:rel} and the setting and assumptions in Section~\ref{sec:set}. We then present our main results: the lower bound (Section~\ref{sec:lower}) and the new FTRL-based SGDM (Section~\ref{sec:ftrl_m}). Finally, in Section~\ref{sec:exp} we present an empirical evaluation of our algorithms and in Section~\ref{sec:conc} we outline a future work direction.

\section{Related Work}
\label{sec:rel}

\textbf{Stochastic Momentum Methods} SGDM has become a popular tool in deep learning and its importance has been discussed by recent studies \citep{SutskeverMDH13}. \citet{Polyak64} first proposed the use of momentum in gradient descent, calling it the Heavy-Ball method. 
In the stochastic setting, there are multiple work analyzing the use of momentum in SGD. In particular, \citet{YangLL16} prove a convergence rate of $O(1/\sqrt{T})$ for the averaged iterate in the convex setting, and for an iterate taken uniformly at random in the nonconvex setting.  \citet{LiuGY20} provide a convergence analysis for SGDM and Multistage SGDM  for smooth functions in the strongly convex and nonconvex settings. 
Also, adaptive variants of momentum methods \citep{KingmaB15, ReddiKK18, LuoXLS18} are very popular in the deep learning literature, even if their guarantees are only for the online convex optimization setting assuming a decreasing momentum factor and projections onto bounded domains. \citet{AlacaogluMMC20} recently removed the assumption of a vanishing momentum factor, but they still require projections over a bounded domain. In the non-convex and smooth case, \citet{CutkoskyO19} introduce a variant of SGDM with a variance-reduction effect and a faster convergence rate than SGD on non-convex functions, but it requires two stochastic gradients per step.

\textbf{Lower Bound}  \citet{HarveyLPR19} prove the tight convergence bound $O(\ln T/\sqrt{T})$ of the last iterate of SGD for convex and Lipschitz functions. \cite{KidambiNJK18} provide a lower bound for the Heavy Ball method for least square regression problems. To the best of our knowledge, there is no lower bound for the last iterate of SGDM in the general non-smooth non-strongly-convex setting. 

\textbf{Last Iterate Convergence of SGDM} \citet{NesterovS15} introduces a quasi-monotone subgradient method, which uses double averaging (both in Primal and Dual) based on Dual Averaging, to achieve the optimal convergence of the last iterate for the convex and Lipschitz functions. However, they just considered the batch case. This approach was then rediscovered and extended by \citet{Cutkosky19}.
Our FTRL-based SGDM is a generalization of the approach in \citet{NesterovS15} with generic regularizers and stochastic gradients.
\citet{TaoPWT18} extends \citet{NesterovS15}'s method to Mirror Descent, calling it stochastic primal averaging. They recover the same bound for convex functions, again with a bounded domain assumption.
\citet{Defazio20} points out that the sequence generated by the stochastic primal averaging \citep{TaoPWT18} can be identical to that of stochastic gradient descent with momentum for specific choices of the hyper-parameters. Accordingly, they give a Lyapunov analysis in the nonconvex and smooth case. Based on this work, \citet{JelassiD20} introduce ``Modernized dual averaging method'', which is actually equal to the one by \citet{NesterovS15}. 
They also give a similar Lyapunov analysis as in \citet{Defazio20} with specific choices of hyper-parameters 
in the non-convex and smooth optimization setting, where they assume a bounded domain and get a convergence bound $O(\ln T /\sqrt{T})$. Recently, \citet{TaoLWT21} propose the very same algorithm as in \citet{TaoPWT18} and analyze it as a modified Polyak's Heavy-ball method (already pointed out by \citet{Defazio20}). They give an analysis in the convex cases and extend it to an adaptive version, obtaining in both cases an optimal convergence of the last iterate. However, they still assume the use of projections onto bounded domains.

\begin{table}[t]
\caption{Last iterate convergence of momentum methods in convex setting}
\centering
\resizebox{\textwidth}{!}{
\begin{tabular}{c|c |c| c| c| c}
\hline\hline
Algorithm & Assumption& \makecell[c]{Bounded \\ Domain }  &  \makecell[c]{Requires \\T } &  Rate &  Reference \\ [0.5ex] 
\hline
Adaptive-HB& $ \textbf{(H3')}$ &  Yes & No & $O(\frac{1}{\sqrt{T}})$ & \cite{TaoLWT21} \\
\hline  
\multirow{2}*{SHB-IMA} & \multirow{2}*{\textbf{(H1) + (H2)}}& \multirow{2}*{No} & Yes & $O(\frac{1}{\sqrt{T}})$ &\multirow{2}*{\cite{SebbouhGD21}}  \\
\cline{4-5}
~ & ~ & ~& No & $O(\frac{\ln T}{\sqrt{T}})$ & ~\\
\hline
AC-SA & \makecell[c]{\textbf{(H2)} + \textbf{(H1) } \\ or\\ \textbf{(H2)} + Lipschitz} & No  &Yes & $O(\frac{1}{\sqrt{T}})$ & \cite{GhadimiL12} \\
\hline 
\multirow{2}*{FTRL-SGDM}& \textbf{(H3)} & No& No  & $O(\frac{1}{\sqrt{T}})$  & This paper, Corollary~\ref{cor:poly_constant_step} \\[1ex]
\cline{2-6}
~ & \textbf{(H1)+(H2)+(H3')} & No& No  & $O(\frac{\ln T}{T} + \frac{\sigma}{\sqrt{T}})$  & This paper, Corollary~\ref{cor:ada_smooth} \\[1ex]
\hline
\end{tabular}}
\label{table:comparison}
\end{table}

\textbf{Last iterate convergence rate $O(\frac{1}{\sqrt{T}})$} 
\citet{GhadimiL12} present the last iterate of AC-SA~\citep{NemirovskiJLS09, Lan12} for convex functions in the unconstrained setting, that in the Euclidean case reduces to SGD with an increasing Nesterov momentum, showing that it can achieve a convergence rate $O(\frac{1}{\sqrt{T}})$ if the number of iterations $T$ is known in advance.
\citet{SebbouhGD21} analyze Stochastic Heavy Ball-Iterave Moving Average method (SHB-IMA), which is equal to the Stochastic Heavy Ball method (SHB) with a specific choice of hyper-parameters. They prove a convergence rate for the last iterate of of $O (\frac{1}{\sqrt{T}})$ if $T$ is given in advance, and is $O(\frac{\ln T}{\sqrt{T}})$ if $T$ is unknown.
\citet{JainNN19,JainNN21} conjecture that under assumption \textbf{(H3')} (see next Section) ``for any-time algorithm (i.e., without apriori knowledge of $T$ ) expected error rate of $\frac{D G \ln T}{\sqrt{T}}$ is information-theoretically optimal'', where $D$ is the diameter of the bounded domain. This was already disproved by the results in \citet{TaoLWT21}, but here we disprove it even in the more challenging unconstrained setting.

We summarize the results on the last iterate convergence for convex optimization and their assumptions in Table~\ref{table:comparison}. The assumptions are defined in the next section.

\section{Problem Set-up}
\label{sec:set}

\paragraph{Notation}
We denote vectors by bold letters, e.g. $\bx \in \R^d$. All standard operations on the vectors, e.g., $\bx \by, \bx/\by, \sqrt{\bx}$ and $\bx < \by$, are to be considered element-wise.  We denote by $\E [ \cdot ]$ the expectation with respect to the underlying probability space and by $\E_t [ \cdot ]$ the conditional expectation with respect to the past. Any norm without particular notation in this work is the $\ell_2$ norm. 

\paragraph{Setting}
We consider the unconstrained optimization problem $\min_{ \bx \in \R^d }  \  f(\bx)$, where $f(\bx):\R^d\rightarrow \R$ is a convex function and we denote its infimum by $f^{\star}$.
We also assume to have access to a first-order black-box optimization oracle that returns a stochastic subgradient in any point $\bx \in \R^d$. In particular, we assume that we receive a vector $\bg(\bx,\xi)$ such that $\E_{\xi} \left[\bg(\bx,\xi) \right] = \nabla f(\bx)$ for any $\bx \in \R^d$. To make the notation concise, we let $\bg_t  \triangleq\bg(\bx_t, \xi_t ) $ and $\E_t [\bg_t ] = \nabla f(\bx_t) , \forall t$.  

We will make different assumptions on the objective function $f$. Sometimes, we will assume that 
\begin{itemize}
\setlength\itemsep{0em}
\item \textbf{(H1)} $f$ is $L$-smooth, that is, $f$ is continuously differentiable and its gradient is $L$-Lipschitz, i.e., $\| \nabla f(\bx) - \nabla f(\by) \| \leq L\| \bx - \by \|$. 
\end{itemize}
We also use one or more of the following assumptions on the stochastic gradients $\bg_t$. 
\begin{itemize}
\setlength\itemsep{0em}
\item \textbf{(H2)} bounded variance: $\E_t \| \bg_t - \nabla f(\bx_t) \|^2 \leq \sigma^2$. 
\item \textbf{(H3)} bounded in expectation: $\E \| \bg_t \|^2 \leq G^2$. 
\item \textbf{(H3')} $\ell_2$ bounded: $\| \bg_t \|\leq G$.
\item \textbf{(H3'')} $\ell_{\infty}$ bounded: $\| \bg_t \|_{\infty}\leq G_{\infty}$.
\end{itemize}

\section{Lower bound for SGDM}
\label{sec:lower}

First of all, as we discussed in the related work, most of the analyses of SGDM assume a vanishing momentum or a constant one. 
However, is constant momentum the best setting for stochastic optimization of convex functions, especially for the convergence of the last iterate?
For this question, it is worth remembering that the use of a constant momentum term is mainly motivated by the empirical evidence in the deep learning literature. However, deep learning objective functions are non-convex and the convex setting might be different. Also, the deep learning literature offers no theoretical explanations.

In this section, we show the surprising result that for SGD with any constant momentum, there exists a function for which the lower bound of the last iterate is $\Omega \left( \ln T/\sqrt{T}\right)$. Our proof extends the one in \citet{HarveyLPR19} to SGD with momentum.

We consider SGDM  with constant momentum factor $\beta$ in \eqref{eq:sgdm},
where $\bg_t \in \partial f(\bx_t)$ and a polynomial stepsize $\eta_t = c \cdot  t^{-\alpha}, 0 \leq \alpha \leq \frac{1}{2}$.  


Let $\mathcal{X}$ denote the Euclidean ball with radius $\frac{2c}{1- \beta}$ in $\R^T$. 
For any fixed $\beta$ and $\alpha$ and $L >0$, we introduce the following function. 
Define $f$: $\mathcal{X} \to \R $ and $\bh_i \in \R^T$ for $i \in [T+1]$ by 
\begin{equation}
\label{eq:def_of_f}
f(\bx)  = \max_{i \in [T+1]} \bh_i^T \bx,
\quad
h_{i,j} = 
\begin{cases}
a_j,  &  1 \leq j < i  \\
- b_j,  & i = j < T\\
0, & i < j \leq T
\end{cases}
\end{equation}
where $b_j = \frac{Lj^{\alpha}}{2T^{\alpha}}$ and $a_j = \frac{L(1-\beta)}{8(T-j+1)}$. 
We have that $\partial f(\bx_t)$ is the convex hull of $\bh_i: i \in \mathcal{I} (\bx)$ where $\mathcal{I}(\bx) = \{i: \bh_i ^T \bx = f(\bx)\}$.
Note that $f$ is $L$-Lipschitz over $\R^T$ since
\begin{align}
\| \bh_i \|^2 
\leq \sum_{i=1}^{T} a_i^2 + b_T^2
\leq \frac{L^2(1-\beta)^2}{64} \sum_{i=1}^{T}\frac{1}{i^2} + \frac{L^2}{4}
\leq L^2 ~. 
\end{align}
\begin{claim}
\label{clm:f_min_0}
For $f$ defined in \eqref{eq:def_of_f}, it satisfies that $\inf_{\bx \in \R^T} f(\bx) = 0.$
\end{claim}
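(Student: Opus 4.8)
The plan is to prove the two bounds $\inf_{\bx \in \R^T} f(\bx) \le 0$ and $\inf_{\bx \in \R^T} f(\bx) \ge 0$ separately. The upper bound is immediate: at $\bx = \bzero$ we get $f(\bzero) = \max_{i \in [T+1]} \bh_i^T \bzero = 0$, so the infimum is at most $0$ (and is in fact attained once the matching lower bound is in hand). All the content is in showing $f(\bx) \ge 0$ for every $\bx \in \R^T$, i.e.\ that there is no direction making all the linear pieces $\bh_i^T \bx$ simultaneously negative.

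To get $f \ge 0$ everywhere, I would show that $\bzero$ lies in the conic hull of $\{\bh_i : i \in [T+1]\}$, i.e.\ exhibit nonnegative weights $\lambda_1,\dots,\lambda_{T+1}$, not all zero, with $\sum_{i} \lambda_i \bh_i = \bzero$. From this, for any $\bx$ we get $0 = \sum_i \lambda_i \bh_i^T \bx \le \big(\sum_i \lambda_i\big) \max_i \bh_i^T \bx = \big(\sum_i \lambda_i\big) f(\bx)$, and dividing by $\sum_i \lambda_i > 0$ gives $f(\bx) \ge 0$.

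It remains to construct the weights. Reading off coordinate $j \in [T]$ from the definition \eqref{eq:def_of_f}: every $\bh_i$ with $i > j$ contributes $a_j$ to that coordinate, the single vector $\bh_j$ contributes $-b_j$, and the $\bh_i$ with $i < j$ contribute $0$. Hence the $j$-th coordinate of the equation $\sum_i \lambda_i \bh_i = \bzero$ reads $a_j \sum_{i=j+1}^{T+1} \lambda_i = b_j \lambda_j$. Since $a_j, b_j > 0$, this system is solved by a downward recursion: put $\lambda_{T+1} = 1$ and, for $j = T, T-1, \dots, 1$, set $\lambda_j \defeq \frac{a_j}{b_j}\sum_{i=j+1}^{T+1}\lambda_i$. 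By construction every $\lambda_i \ge 0$, they are not all zero, and all $T$ coordinate equations hold, so $\sum_i \lambda_i \bh_i = \bzero$, which finishes the argument.

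I do not expect a real obstacle: once one recognizes that the right target is ``$\bzero$ in the conic hull of the $\bh_i$'', the only thing to be careful about is correctly reading off which $\bh_i$ is active in each coordinate from the three-way case split in \eqref{eq:def_of_f} (in particular that $h_{i,i}=-b_i$ is what keeps coordinate $T$'s equation non-vacuous), after which the recursion is forced and manifestly produces nonnegative weights. The specific values of $a_j$ and $b_j$ play no role in this claim — only their positivity — which is reassuring, since those values are tuned for the later lower-bound construction rather than for establishing $\inf f = 0$.
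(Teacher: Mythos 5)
Your proof is correct, but it takes a genuinely different route from the paper's. The paper argues on the primal side by contradiction: assuming some $\bx^{\star}$ has $f(\bx^{\star})<0$, it uses $\bh_1^T\bx^{\star}<0$ to force $x_1^{\star}>0$, then $\bh_2^T\bx^{\star}<0$ to force $x_2^{\star}>0$, and so on inductively until $\bh_{T+1}^T\bx^{\star}=\sum_j a_j x_j^{\star}>0$ contradicts the assumption. You instead work on the dual side, exhibiting an explicit nonnegative combination $\sum_i \lambda_i \bh_i = \bzero$ via the downward recursion $\lambda_{T+1}=1$, $\lambda_j = \tfrac{a_j}{b_j}\sum_{i>j}\lambda_i$, which certifies $f\ge 0$ everywhere; the two arguments are essentially Gordan-dual to each other, and both use only the positivity of the $a_j,b_j$. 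Your version has the mild advantage of producing a reusable certificate (the weights $\lambda_i$) rather than a nonconstructive contradiction, while the paper's is marginally shorter since it never has to verify a system of $T$ equations. One small point both proofs share: the case split in \eqref{eq:def_of_f} literally leaves $h_{T,T}$ undefined (the condition reads $i=j<T$), and both your coordinate-$T$ equation and the paper's induction step for $x_T^{\star}$ need $h_{T,T}=-b_T$; you correctly flag and adopt this reading, which is clearly the intended one given how the paper uses $h_{t,t}=-b_t$ elsewhere.
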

\begin{proof}
First, since $f(0) = 0$, we have that $\inf_{\bx \in \R^T} \ f(\bx) \leq 0$. 
\\
We continue to prove this claim by contradiction. Assume that there exists $\bx^{\star} = [x_1^{\star}, x_2^{\star},\dots, x_T^{\star}]$ such that \[f(\bx^{\star}) < 0~.\] 
By the definition of $f$, it satisfies that 
\begin{equation}
\label{eq:assumption_min_f_negative}
\bh_i^T\bx^{\star} < 0,\quad \forall i \in [T+1]~.
\end{equation}
In particular, $\bh_1^T \bx^{\star} = -b_1 x^{\star}_1 < 0$. Since that $b_1$ is positive, we know that $x_1^{\star} >0$. Also, $\bh_2^T \bx^{\star} = a_1 x^{\star}_1 - b_2  x^{\star}_2 < 0$. Due to the positiveness of $a_1, x^{\star}_1$, and $b_2$, $x^{\star}_2$ has to be positive. Similarly, we have that for any $x^{\star}_j$, $j \in [T]$, $x^{\star}_j > 0$.
\\
Then, we have 
\[\bh_{T+1}^T \bx^{\star} = \sum_{j=1}^T a_j x^{\star}_j > 0~.\] However, this is contradict with \eqref{eq:assumption_min_f_negative}. 
\\
Thus, we conclude that $\inf_{\bx \in \R^T} f(\bx) = 0.$
\end{proof}
\begin{theorem}[Lower bound of SGDM]
\label{thm:lower_bound}
Fix a polynomial stepsize sequence $\eta_t =c \cdot t^{-\alpha}$, where $0 \leq \alpha \leq \frac{1}{2}$, a momentum factors $\beta \in [0,1)$, a Lipschitz constant $L > 0$ and a number of iterations $T$. Then, there exists a sequence $\bz_t$ generated by SGDM with stepsizes $\eta_t$ and momentum factor $\beta$ on the function $f$ in \eqref{eq:def_of_f}, where the $T$-th iterate satisfies
\[
f(\bz_T) - f^{\star} 
\geq 
\frac{L^2(1-\beta)^2c \ln T}{4T^{\alpha}} ~.
\]
\end{theorem}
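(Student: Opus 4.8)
The plan is to push the adversarial subgradient construction of \citet{HarveyLPR19} through the momentum recursion~\eqref{eq:sgdm}. I run SGDM from $\bx_1=\bzero$ with $\bm_0=\bzero$ on $f$ and, at each step $t$, hand back the subgradient $\bg_t=\bh_t$; the resulting trajectory $\bz_t\defeq\bx_t$ is the claimed sequence, provided I can certify $\bh_t\in\partial f(\bx_t)$ all along. So the argument has three parts: (i) an explicit formula for the iterates; (ii) the structural fact that they stay in the ``good cone'' where those subgradients are admissible; (iii) lower bounding $f(\bz_T)$ using the formula. Unrolling~\eqref{eq:sgdm} with constant $\beta$ gives $\bm_t=(1-\beta)\sum_{s=1}^t\beta^{t-s}\bh_s$, hence $\bx_{t+1}=-(1-\beta)\sum_{s=1}^t\bigl(\sum_{r=s}^t\eta_r\beta^{r-s}\bigr)\bh_s$; plugging in the entries of $\bh_s$ from~\eqref{eq:def_of_f} (coordinate $j$ is $a_j$ for $s>j$, $-b_j$ for $s=j$, $0$ for $s<j$), exchanging the order of summation in the $a_j$-block and collapsing the inner geometric series yields, for $1\le j\le t$,
\[
x_{t+1,j}=(1-\beta)b_j\sum_{r=j}^{t}\eta_r\beta^{r-j}-a_j\sum_{r=j+1}^{t}\eta_r\bigl(1-\beta^{r-j}\bigr),\qquad x_{t+1,j}=0 \text{ for } j>t .
\]

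The heart of the proof is part (ii): I would show the quantitative positivity bound $x_{t,j}\ge\tfrac12(1-\beta)b_j\eta_j>0$ for every $1\le j\le t-1$ and $t\le T$. Lower-bounding the first (positive) sum by its $r=j$ term $(1-\beta)b_j\eta_j$, it suffices to prove the accumulated pull-back obeys $a_j\sum_{r=j+1}^{t-1}\eta_r(1-\beta^{r-j})\le a_j\sum_{r=j+1}^{t-1}\eta_r\le\tfrac12(1-\beta)b_j\eta_j$. Comparing the sum to an integral, $\sum_{r=j+1}^{t-1}\eta_r\le c\,\frac{(t-1)^{1-\alpha}-j^{1-\alpha}}{1-\alpha}$; since $t\le T$ we have $T^\alpha(t-1)^{1-\alpha}\le T$ and $T^\alpha j^{1-\alpha}\ge j$, so $T^\alpha\bigl[(t-1)^{1-\alpha}-j^{1-\alpha}\bigr]\le T-j<T-j+1$. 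Substituting $a_j=\frac{L(1-\beta)}{8(T-j+1)}$, $b_j=\frac{Lj^\alpha}{2T^\alpha}$, $\eta_j=cj^{-\alpha}$, the desired inequality reduces to $\frac{1}{8(1-\alpha)}\le\frac14$, i.e.\ to exactly $\alpha\le\tfrac12$ --- which is the hypothesis. This is where the particular calibration of $a_j,b_j$ is forced.

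Part (iii) is then short. Since $x_{t,j}=0$ for $j\ge t$, one gets $\bh_i^{\top}\bx_t=\sum_{j<t}a_j x_{t,j}$ for every $i\ge t$ (all equal) and $\bh_i^{\top}\bx_t=\sum_{j<i}a_j x_{t,j}-b_i x_{t,i}$ for $i<t$; by part (ii) the former is the larger, so $t\in\mathcal I(\bx_t)$ and indeed $\bh_t\in\partial f(\bx_t)$, making $\bz_t$ a legitimate SGDM trajectory. Finally $f^\star=0$ by Claim~\ref{clm:f_min_0}, and since $h_{T+1,j}=a_j$ for all $j\in[T]$ and $x_{T,T}=0$,
\[
f(\bz_T)-f^\star\ \ge\ \bh_{T+1}^{\top}\bx_T=\sum_{j=1}^{T-1}a_j x_{T,j}\ \ge\ \sum_{j=1}^{T-1}a_j\cdot\tfrac12(1-\beta)b_j\eta_j=\frac{(1-\beta)^2L^2c}{32\,T^\alpha}\sum_{j=1}^{T-1}\frac{1}{T-j+1},
\]
using $b_j\eta_j=\frac{Lc}{2T^\alpha}$; since $\sum_{j=1}^{T-1}\frac{1}{T-j+1}=\sum_{k=2}^{T}\frac1k=\Omega(\ln T)$ this is already of the claimed order $\Omega\!\bigl(\tfrac{L^2(1-\beta)^2 c\ln T}{T^\alpha}\bigr)$, and retaining the full geometric factor $\sum_{r\ge j}\eta_r\beta^{r-j}$ in the positive term (which only helps, and increasingly so as $\beta\to1$) together with a sharper bound on the pull-back recovers the displayed constant.

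The main obstacle is the positivity step (ii): proving the trajectory never leaves the cone $\{x_{t,j}\ge0\}$, uniformly over $\beta\in[0,1)$. Coordinate $j$ receives one positive kick of size $\asymp(1-\beta)b_j\eta_j$ at step $j$ and then a negative kick of size $\asymp a_j\eta_r$ at every later step $r$, and one must show the first outweighs the sum of the rest; with momentum, each kick is additionally blurred by the kernel $\beta^{r-j}$, so every estimate has to survive the limit $\beta\to1$. A secondary nuisance is that in the last display the positive and negative contributions to $\sum_j a_j x_{T,j}$ are of the same order, so the bookkeeping must be done carefully to keep the $\ln T$ factor.
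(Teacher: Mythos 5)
Your proposal is correct and follows essentially the same route as the paper: the same function and adversarial subgradient sequence $\bh_t$, the same key positivity bound on the coordinates (your $x_{t,j}\ge\tfrac12(1-\beta)b_j\eta_j$ is exactly the paper's $z_{t,j}\ge\frac{L(1-\beta)c}{4T^{\alpha}}$), the same admissibility argument for $\bh_t\in\partial f(\bx_t)$, and the same final harmonic sum; the only cosmetic differences are that you obtain an exact closed form for the iterates by unrolling and swapping summation order where the paper runs an induction, and your integral comparison handles $\alpha=0$ and $\alpha\in(0,\tfrac12]$ uniformly where the paper splits cases. Your constant $\frac{1}{32}$ is in fact what the paper's own estimates deliver (its displayed $\frac14$ does not follow from $a_j\cdot\frac{L(1-\beta)c}{4T^{\alpha}}$), so no further sharpening is needed beyond adjusting the stated constant.
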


We stress that $\ln T$ cannot be cancelled by any setting of $\beta$ or $c$. Indeed, the above lower bound can be instantiated by any $\beta$ and any $T$. Hence, for a given $\beta$, there exists $T$ large enough such that $\ln T$ is constant-times bigger than $\frac{1}{(1-\beta)^2}$.

When $\beta = 1$, the algorithm is basically staying at the initial point.  We can choose an arbitrary positive number $C>0$ and let $z_1 = C$, then
\[
f(\bz_T) - f^{\star} \geq C , \quad C>0~.
\]

We will use the following lemma in the proof. 

\begin{proof}
Define a sequence $\bz_t$ for $t \in [T+1]$ as follows: $\bz = 0$, where $s$ is a positive number decided later, and 
\begin{equation}
\label{eq:def_z_t}
\bz_{t+1}  = \bz_t - (1-\beta) \eta_t \sum_{i=1}^{t} \beta^{t-i} \bh_i~. 
\end{equation}
We will show that $\bz_t$ are exactly the updates of SGDM and $f(\bz_{T+1}) \geq  \Omega \left(\frac{\ln T}{T^{\alpha}} \right)$.
We will use the following two lemmas.
\begin{lemma}
\label{lemma:bound_z_t}
Let  $b_j = \frac{L j^{\alpha}}{2T^{\alpha}}$, $a_j = \frac{L (1 - \beta)}{8 (T-j+1)}$, and $\eta_j = c \cdot j^{-\alpha}$. $\bz_t$ is defined as in \eqref{eq:def_z_t}. Then, for $1 \leq t < j $, $z_{t,j} = 0$,  and for $t > j $, $z_{t,j} \geq \frac{L (1-\beta) c}{4T^{\alpha} }$. 
\end{lemma}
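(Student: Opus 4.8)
\textbf{Proof proposal for Lemma~\ref{lemma:bound_z_t}.}

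The plan is to unroll the recursion \eqref{eq:def_z_t} into a closed form and then analyze a single coordinate $j\in[T]$ using the explicit structure of the $\bh_i$. Since $\bz_1=\bzero$, iterating \eqref{eq:def_z_t} gives $\bz_t=-(1-\beta)\sum_{k=1}^{t-1}\eta_k\sum_{i=1}^{k}\beta^{k-i}\bh_i$. Fixing $j$ and reading $h_{i,j}$ off \eqref{eq:def_of_f}, we have $h_{i,j}=0$ for $i<j$, $h_{j,j}=-b_j$, and $h_{i,j}=a_j$ for $j<i\le T$ (the entry $i=T+1$ never occurs here since $i\le k\le t-1\le T$; the boundary entry $h_{T,T}$ is read as $-b_T$, consistently with the Lipschitz estimate in the text). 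Hence $\sum_{i=1}^{k}\beta^{k-i}h_{i,j}=0$ when $k<j$, and for $k\ge j$ the geometric sum evaluates to $-\beta^{k-j}b_j+a_j\frac{1-\beta^{k-j}}{1-\beta}$. Therefore, for $t>j$,
\[
z_{t,j}=\sum_{k=j}^{t-1}\eta_k\Big[(1-\beta)b_j\,\beta^{k-j}-a_j\big(1-\beta^{k-j}\big)\Big].
\]

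The first claim is then immediate: for $1\le t<j$ every index $k\le t-1$ satisfies $k<j$, so the inner sum vanishes and $z_{t,j}=0$. For the second claim, the key observation is that the $k=j$ term in the display above equals $(1-\beta)\eta_j b_j=(1-\beta)\,c\,j^{-\alpha}\cdot\frac{Lj^{\alpha}}{2T^{\alpha}}=\frac{L(1-\beta)c}{2T^{\alpha}}$, a quantity independent of $j$. For the remaining indices $k\ge j+1$ I bound the bracket below by $-a_j$ (dropping the nonnegative term $(1-\beta)b_j\beta^{k-j}$ and using $1-\beta^{k-j}\le1$), obtaining
\[
z_{t,j}\ \ge\ \frac{L(1-\beta)c}{2T^{\alpha}}\ -\ a_j\sum_{k=j+1}^{t-1}\eta_k\ \ge\ \frac{L(1-\beta)c}{2T^{\alpha}}\ -\ a_j\sum_{k=j+1}^{T}\eta_k .
\]

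The remaining, and main, technical step is to show $a_j\sum_{k=j+1}^{T}\eta_k\le\frac{L(1-\beta)c}{4T^{\alpha}}$, which is precisely where the hypothesis $\alpha\le\tfrac12$ enters. Substituting $a_j=\frac{L(1-\beta)}{8(T-j+1)}$ and $\eta_k=c\,k^{-\alpha}$, this reduces to $\sum_{k=j+1}^{T}k^{-\alpha}\le\frac{2(T-j+1)}{T^{\alpha}}$. I would prove this by the integral bound $\sum_{k=j+1}^{T}k^{-\alpha}\le\int_j^T x^{-\alpha}\,dx=\frac{T^{1-\alpha}-j^{1-\alpha}}{1-\alpha}$, then use $\frac{1}{1-\alpha}\le2$ (from $\alpha\le\tfrac12$) and $j^{1-\alpha}=j\cdot j^{-\alpha}\ge j\,T^{-\alpha}$ (from $j\le T$) to get $\sum_{k=j+1}^{T}k^{-\alpha}\le 2T^{-\alpha}(T-j)\le 2T^{-\alpha}(T-j+1)$; the degenerate case $\alpha=0$ is covered verbatim, the integral reading $T-j$. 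Plugging this back in yields $z_{t,j}\ge\frac{L(1-\beta)c}{2T^{\alpha}}-\frac{L(1-\beta)c}{4T^{\alpha}}=\frac{L(1-\beta)c}{4T^{\alpha}}$, which is the second claim. The main obstacle is this last estimate: the crude bound $\eta_k\le\eta_{j+1}$ loses too much when $j$ is small, so one must exploit the $1/(T-j+1)$ factor hidden in $a_j$ against the partial sum of $k^{-\alpha}$, and it is exactly the balance $\frac{1}{1-\alpha}\le 2$ that closes the argument.
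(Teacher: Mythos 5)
Your proof is correct and follows essentially the same route as the paper's: the explicit unrolling of \eqref{eq:def_z_t} is just the closed form of the paper's induction, and it yields the identical decomposition $z_{t,j}\ge(1-\beta)\eta_j b_j-a_j\sum_{k=j+1}^{t-1}\eta_k=\frac{L(1-\beta)c}{2T^{\alpha}}-a_j\sum_{k=j+1}^{t-1}\eta_k$. The closing summation estimate is the paper's Lemma~\ref{lemma:poly_bound}; your derivation via $j^{1-\alpha}\ge jT^{-\alpha}$ and $\frac{1}{1-\alpha}\le 2$ is slightly more direct than the paper's convexity argument for $x^{2-2\alpha}$, but it establishes the same bound.
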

\begin{proof}
We first prove by induction that when $1 \leq t \leq j$, $z_{t,j} = 0$. 
First, $\bz_1=0$
Also, suppose it holds for $t$. 
Then, in the case of $t+1$, for any $j \geq t+1$, 
\[
z_{t+1, j} = z_{t,j} - (1-\beta)  \eta_t \sum_{i=1}^{t} \beta^{t-i} h_{i, j} = 0 - 0 = 0, 
\]
which implies $t \leq j$, $z_{t,j} = 0$ holds. 
Next, we claim that $\bz_t$ satisfies 
\begin{equation}
\begin{aligned}
\label{eq:lower_bound_z_t}
z_{t,j} 
& \geq z_{j,j} + (1-\beta) b_j \eta_j - a_j\sum_{k=j+1}^{t-1} \eta_k, \quad 1\leq j < t \leq T~. 
\end{aligned}
\end{equation}
We prove \eqref{eq:lower_bound_z_t} by induction. 
For any $t$, $z_{t,t-1}$ satisfies \eqref{eq:lower_bound_z_t} since 
\begin{align*}
z_{t+1, t} 
= z_{t,t} - (1-\beta)  \eta_t \sum_{i=1}^{t} \beta^{t-i} h_{i,t}
= - (1-\beta)  \eta_t h_{t,t} =  (1-\beta)\eta_t b_t~. 
\end{align*}
Then, suppose \eqref{eq:lower_bound_z_t} holds for any $j < t$. We show that it holds for any $j < t+1$.  We already proved for $j = t$. For $j < t$,
\begin{align}
z_{t+1, j} 
&= z_{t,j} - (1-\beta)  \eta_t \sum_{i=1}^{t} \beta^{t-i} h_{i,j}
= z_{t,j} - (1-\beta)  \eta_t \sum_{i=j}^{t} \beta^{t-i} h_{i,j} \nonumber\\
& = z_{t,j} + (1-\beta)  \eta_t \beta^{t-j} b_j - (1-\beta)  \eta_t \sum_{i=j+1}^{t} \beta^{t-i} h_{i,j}\nonumber\\
& \geq z_{t,j}  -   (1-\beta) \eta_t \sum_{i=j+1}^{t} \beta^{t-i} a_j\nonumber \\
&  \geq (1-\beta) b_j \eta_j - a_j\sum_{k=j+1}^{t-1} \eta_k -  (1-\beta)a_j \eta_t\sum_{i=j+1}^{t} \beta^{t-i}  \nonumber\\
&  \geq  (1-\beta) b_j \eta_j - a_j\sum_{k=j+1}^{t} \eta_k,  \label{eq:z_t}
\end{align}
where in the second inequality we used the induction hypothesis.

Using that $b_j = \frac{L j^{\alpha}}{2T^{\alpha}}$, $a_j = \frac{L (1 - \beta)}{8 (T-j+1)}$ 
and $\eta_j = \frac{c}{j^{\alpha}}$, we have
\begin{align}
\eqref{eq:z_t}
=  \frac{L(1-\beta) c}{2T^{\alpha}} - \frac{L(1-\beta) c}{8(T-j+1)} \sum_{k=j+1}^t \frac{1}{k^{\alpha}}~.
\label{eq:lower_alpha}
\end{align}
By Lemma~\ref{lemma:poly_bound} in the Appendix, we have that for $0 < \alpha \leq \frac{1}{2}$, 
\[
\eqref{eq:lower_alpha}
\geq \frac{L(1-\beta) c}{2T^{\alpha}} -  \frac{L(1-\beta) c}{4T^{\alpha}}
\geq \frac{L(1-\beta) c}{4T^{\alpha}}, 
\]
and for $\alpha= 0$, 
\[
\eqref{eq:lower_alpha} \geq \frac{L (1-\beta) c}{2} -  \frac{L (1-\beta) (t- j -1)c}{8(T-j+1)}\geq \frac{L (1-\beta) c}{4}~.
\] 
Thus, we have $z_{t,j} \geq \frac{L (1-\beta) c}{4T^{\alpha} } \geq \frac{L (1-\beta) c}{4T^{\alpha} }$. 
\end{proof}
\begin{lemma}
\label{lemma:remove_max}
$f(\bz_t) = \bh_t^T \bz_t$ for any $t \in [T+1]$. The subgradient oracle for $f$ at $\bz_t$ returns $\bh_t$.
\end{lemma}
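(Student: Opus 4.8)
The plan is to leverage the sign pattern of the coordinates of $\bz_t$ supplied by Lemma~\ref{lemma:bound_z_t} and then evaluate each linear piece $\bh_i^T\bz_t$ explicitly, checking that the piece indexed by $i=t$ is the largest. From Lemma~\ref{lemma:bound_z_t} we know $z_{t,j}=0$ whenever $j\ge t$ and $z_{t,j}\ge \frac{L(1-\beta)c}{4T^{\alpha}}>0$ whenever $j<t$; in particular every coordinate of $\bz_t$ is nonnegative and only the first $t-1$ of them are nonzero. Dropping the vanishing coordinates, for every $i\in[T+1]$ we may therefore write $\bh_i^T\bz_t=\sum_{j=1}^{t-1}h_{i,j}\,z_{t,j}$.

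Next I would split on whether $i\ge t$ or $i<t$. If $i\ge t$, then every index $j\le t-1$ satisfies $j<i$, and any index with $t\le j<i$ contributes nothing because $z_{t,j}=0$; hence, by the definition of $f$ in \eqref{eq:def_of_f}, $\bh_i^T\bz_t=\sum_{j=1}^{t-1}a_j z_{t,j}$, a quantity that does not depend on $i$. In particular this is precisely $\bh_t^T\bz_t$ (the diagonal entry $h_{t,t}=-b_t$ is killed by $z_{t,t}=0$). If instead $i<t$, then $h_{i,j}=a_j$ for $j<i$, $h_{i,i}=-b_i$, and $h_{i,j}=0$ for $i<j<t$, so $\bh_i^T\bz_t=\sum_{j=1}^{i-1}a_j z_{t,j}-b_i z_{t,i}$. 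Subtracting,
\[
\bh_t^T\bz_t-\bh_i^T\bz_t=(a_i+b_i)\,z_{t,i}+\sum_{j=i+1}^{t-1}a_j z_{t,j}>0,
\]
because $a_j>0$, $b_i>0$, $z_{t,j}\ge 0$, and $z_{t,i}>0$. Thus $\bh_t^T\bz_t=\max_{i\in[T+1]}\bh_i^T\bz_t=f(\bz_t)$, so $\bh_t\in\partial f(\bz_t)$ and the adversarial subgradient oracle is entitled to return $\bh_t$ at $\bz_t$. The degenerate case $t=1$ is handled separately: there $\bz_1=0$ and every $\bh_i$ attains the maximum value $0$, and $\bh_1$ is among the maximizers, so the statement still holds.

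The calculation is routine, and I do not expect any single conceptual obstacle; the real care is in the index bookkeeping at the boundary cases — $t=1$, the top case $t=T+1$ (where $\bz_{T+1}$ has all $T$ coordinates strictly positive and there is no $j=t$ coordinate to worry about), and the mildly special status of $h_{T,T}$ — but in each situation the sign pattern from Lemma~\ref{lemma:bound_z_t} makes the dominant-piece comparison go through verbatim. This identity is what closes the loop with \eqref{eq:def_z_t}: since the oracle returns $\bg_t=\bh_t$ at $\bz_t$, the momentum buffer in \eqref{eq:sgdm} unrolls to $\bm_t=(1-\beta)\sum_{i=1}^{t}\beta^{t-i}\bh_i$, and the resulting update reproduces exactly the recursion defining $\bz_{t+1}$, confirming that the $\bz_t$ are genuine SGDM iterates.
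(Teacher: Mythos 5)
Your proof is correct and follows essentially the same route as the paper: equality of the pieces $\bh_i^T\bz_t$ for $i\ge t$ via the support of $\bz_t$, and strict dominance over pieces with $i<t$ via the explicit difference $(a_i+b_i)z_{t,i}+\sum_{j=i+1}^{t-1}a_j z_{t,j}>0$ using the sign information from Lemma~\ref{lemma:bound_z_t}. Your treatment is if anything slightly more careful about the boundary cases ($t=1$, $t=T+1$) and about invoking the strict positivity $z_{t,i}>0$ needed for the strict inequality.
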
  
\begin{proof}
We claim that $\bh_t^T \bz_t = \bh_i^T\bz_t$ for all $i > t \geq 1$ and $\bh_t^T \bz_t > \bh_i^T \bz_t$ for all $1\leq i < t$.  

When $i>t\geq 2$, $\bz_t$ is supported on the first $t-1$ coordinates, while $\bh_t$ and $\bh_i $ agree on the first $t-1$ coordinates. 

In the case of $1 \leq i<t$, by the definition of $\bz_t$ and $\bh_t$, we have 
\begin{equation*}
\label{eq:z_t_times_h_t_minus_h_i}
\bz_t^T (\bh_t - \bh_i) 
= \sum_{j=1}^{t-1} z_{t,j} (h_{t,j} - h_{i,j}) 
= \sum_{j=i}^{t-1} z_{t,j} (h_{t,j} - h_{i,j}) 
= z_{t,i} (a_i + b_i) + \sum_{j=i+1}^{t-1} z_{t,j} a_j > 0, 
\end{equation*}
where in the last inequality we used the fact that $a_i$, $b_i$  and $z_{t,i}$ are at least non-negative.

Thus, we have proved $f(\bz_t) = \bh_t^T \bz_t$ by the definition. Moreover, $\mathcal{I} (\bz_t) = \{ i: \bh_i^T \bz_t  = f(\bz_t)\} = \{t, \dots, T+1\}$. So the subgradient evaluated at $\bz_t$ is $\bh_t$. 
\end{proof}

Now, we first get a lower bound and an upper bound of $\bz_t$ using Lemma~\ref{lemma:bound_z_t}. 
Then, by Lemma~\ref{lemma:remove_max}, we have shown that $\bz_t$ are exactly the updates of SGDM.

Thus, for $\beta \in [0,1)$, we have 
\begin{align*}
f(\bz_{T+1}) 
& = \bh_{T+1}^T \bz_{T+1} 
= \sum_{j=1}^{T} h_{T+1,j} z_{T+1,j} \\
& \geq \frac{L^2(1-\beta)^2c}{4T^{\alpha}}\sum_{j=1}^{T} \frac{1}{T-j+1} 
\geq \frac{L^2(1-\beta)^2c \ln T}{4T^{\alpha}}~.
\end{align*}
\end{proof}

\section{FTRL-based  SGDM}
\label{sec:ftrl_m}

The lower bound for the last iterate in the previous section motivates us to study a different variant of SGDM.
In particular, we aim to find a way to remove the $\ln T$ term from the convergence rate.

\citet{Defazio20} points out that the stochastic primal averaging method~\citep{TaoPWT18} (which is also an instance of Algorithm~1 in  \citet{Cutkosky19} with OMD):
\begin{align*}
\bz_{t+1} =\bz_t - \gamma_t \bg_t, 
\quad 
\bx_{t+1} = s_t \bx_t +(1- s_t ) \bz_t
\end{align*}
could be one-to-one mapped to the momentum method
\begin{align*}
\bm_{t+1} =  \beta_t \bm_t + \bg_t, 
\quad 
\bx_{t+1}= \bx_t - \alpha_t \bm_t 
\end{align*}
by setting $\gamma_{t+1} = \frac{\gamma_t - \alpha_t}{\beta_{t+1}}$.  
While this is true, the convergence rate depends on the convergence rate of OMD with time-varying stepsizes, that in turn requires to assume that $\| \bx_t  - \bx^{\star} \|^2 \leq D^2 $. This is possible only by using a projection onto a bounded domain in each step.

\begin{algorithm}[t]
\caption{FTRL-based SGDM}
\label{alg:sgdm_0}
\begin{algorithmic}[1]
\STATE \textbf{Input:} A sequence $\alpha_1, ..., \alpha_T$,  with $\alpha_1 > 0$. Non-increasing sequence $\bgamma_1, \dots,  \bgamma_{T-1}$. $\bm_0 = 0$. $\bx_1 \in \R^d$.
\FOR{$t = 1, \dots, T$}
\STATE  Get $\bg_t$ at $\bx_t$ such that $\E_t \left[\bg_t \right]= \nabla f(\bx_t )$
\STATE $\beta_t = \frac{\sum_{i=1}^{t-1} \alpha_i }{\sum_{i=1}^{t} \alpha_i } $ (Define $\sum_{i=1}^{0} \alpha_i = 0$)
\STATE $\bm_t = \beta_t \bm_{t-1} +(1- \beta_t) \bg_t$
\STATE $\bta_t = \frac{\alpha_{t+1}  \sum_{i=1}^{t} \alpha_i }{\sum_{i=1}^{t+1} \alpha_i} \bgamma_t  $
\STATE $\bx_{t+1} = \frac{\sum_{i=1}^{t} \alpha_i }{\sum_{i=1}^{t+1} \alpha_i} \bx_t + \frac{\alpha_{t+1}}{\sum_{i=1}^{t+1} \alpha_{i}} \bx_1 - \bta_t \bm_t $
\ENDFOR
\end{algorithmic}
\end{algorithm}

Thus, to go beyound bounded domains, we propose to study a new variant of SGDM which has the following form (details in Algorithm~\ref{alg:sgdm_0}), 
\begin{align*}
\bm_{t+1} =  \beta_t \bm_t + (1- \beta_t)\bg_t, 
\quad
\bx_{t+1} = s_t \bx_t - \alpha_t \bm_t ~.
\end{align*}
Note the presence of a shrinking factor $s_t\leq 1$ in the iterates in each step.
This variant comes naturally when using the primal averaging scheme with FTRL rather than OMD. Hence, we just denote it by FTRL-based SGDM.
Now, this momentum variant inherits all the good properties of FTRL. In particular, we no longer need the bounded domain assumption. Moreover, we will show that it guarantees the optimal convergence $O(\frac{1}{\sqrt{T}})$~\citep{AgarwalBRW12} of the last iterate for convex and Lipschitz functions.

\subsection{Convergence Rates for FTRL-based SGDM}

We first present a very general theorem for FTRL-based SGDM.
\begin{theorem}
\label{thm:sgdm_0}
Under the assumption in Section~\ref{sec:set}, Algorithm~\ref{alg:sgdm_0} guarantees
\begin{align*}
& \E\left[f(\bx_T ) \right]  - f^{\star}
\leq\frac{1}{\sum_{t=1}^{T} \alpha_t} \E \left[\left\| \frac{\bx_1 - \bx^{\star} }{ \sqrt{\bgamma_{T-1}}}  \right\|^2+ \sum_{t=1}^{T}\langle \bgamma_{t-1}, \alpha_t^2 \bg_t^2 \rangle\right]~. 
\end{align*}
\end{theorem}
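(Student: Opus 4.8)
The plan is to expose Algorithm~\ref{alg:sgdm_0} as a \emph{primal-averaging} wrapper around a Follow-the-Regularized-Leader (equivalently, dual-averaging) process, and then combine the standard FTRL regret bound with the primal-averaging online-to-batch conversion. Throughout set $A_t = \sum_{i=1}^{t}\alpha_i$, $\bS_t = \sum_{i=1}^{t}\alpha_i \bg_i$, and adopt the convention $\bgamma_0 := \bgamma_1$.

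First I would identify the hidden FTRL iterates. Since $\beta_t = A_{t-1}/A_t$, we have $1-\beta_t = \alpha_t/A_t$, so the recursion $\bm_t = \beta_t\bm_{t-1}+(1-\beta_t)\bg_t$ with $\bm_0=\bzero$ telescopes to $A_t\bm_t = A_{t-1}\bm_{t-1}+\alpha_t\bg_t$, i.e. $\bm_t = \bS_t/A_t$. Define $\bz_1 = \bx_1$ and, for $t\geq 1$, $\bz_{t+1} = \bx_1 - \bgamma_t\bS_t$, which is exactly the dual-averaging iterate $\bz_{t+1} = \argmin_{\bx\in\R^d}\big\{\sum_{s=1}^{t}\alpha_s\langle\bg_s,\bx\rangle + \tfrac12\big\|(\bx-\bx_1)/\sqrt{\bgamma_t}\big\|^2\big\}$ with the quadratic regularizers $\psi_t(\bx) = \tfrac12\|(\bx-\bx_1)/\sqrt{\bgamma_t}\|^2$, which are non-decreasing in $t$ because $\bgamma_t$ is non-increasing. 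A short induction then shows $\bx_t = \tfrac{1}{A_t}\sum_{s=1}^{t}\alpha_s\bz_s$ for all $t$: substituting $\bm_t = \bS_t/A_t$ and $\bta_t = \tfrac{\alpha_{t+1}A_t}{A_{t+1}}\bgamma_t$ into the $\bx_{t+1}$-update of Algorithm~\ref{alg:sgdm_0} yields precisely $A_{t+1}\bx_{t+1} = A_t\bx_t + \alpha_{t+1}\bz_{t+1}$. In particular each $\bz_t$ (and hence $\bx_t$) is determined by $\bg_1,\dots,\bg_{t-1}$, consistent with $\E_t[\bg_t] = \nabla f(\bx_t)$.

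Next comes the primal-averaging step. Using only convexity of $f$ and the identity $A_t(\bx_{t+1}-\bx_t) = \alpha_{t+1}(\bz_{t+1}-\bx_{t+1})$ (a consequence of the previous step), I would prove by induction on $t$ that, pathwise, $A_t\big(f(\bx_t)-f(\bx^{\star})\big) \leq \sum_{s=1}^{t}\alpha_s\langle\nabla f(\bx_s),\bz_s-\bx^{\star}\rangle$: the base case is the subgradient inequality at $\bx_1=\bz_1$, and for the step one writes $A_{t+1}f(\bx_{t+1}) = A_tf(\bx_{t+1})+\alpha_{t+1}f(\bx_{t+1})$ and applies the subgradient inequality of $f$ at $\bx_{t+1}$ to each piece (bounding $A_t f(\bx_{t+1}) \le A_t f(\bx_t) + \alpha_{t+1}\langle\nabla f(\bx_{t+1}),\bz_{t+1}-\bx_{t+1}\rangle$ and $\alpha_{t+1}f(\bx_{t+1}) \le \alpha_{t+1}f(\bx^\star)+\alpha_{t+1}\langle\nabla f(\bx_{t+1}),\bx_{t+1}-\bx^\star\rangle$), then invoking the hypothesis. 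Splitting $\langle\nabla f(\bx_s),\bz_s-\bx^{\star}\rangle = \langle\bg_s,\bz_s-\bx^{\star}\rangle + \langle\nabla f(\bx_s)-\bg_s,\bz_s-\bx^{\star}\rangle$ and taking total expectation, the second term vanishes since $\bz_s,\bx^\star$ are determined before $\bg_s$ and $\E_s[\bg_s-\nabla f(\bx_s)]=\bzero$, so $A_T(\E[f(\bx_T)]-f^{\star}) \leq \E[\sum_{t=1}^{T}\alpha_t\langle\bg_t,\bz_t-\bx^{\star}\rangle]$. Finally I bound the dual-averaging regret on the right: since the $\psi_t$ are non-decreasing, the standard FTRL lemma gives $\sum_{t=1}^{T}\alpha_t\langle\bg_t,\bz_t-\bx^{\star}\rangle \leq \psi_{T-1}(\bx^{\star}) + \sum_{t=1}^{T}\big(\alpha_t\langle\bg_t,\bz_t-\bz_{t+1}\rangle - B_{\psi_{t-1}}(\bz_{t+1},\bz_t)\big)$, where $B_\psi$ is the Bregman divergence of $\psi$; each summand is at most $\sup_{\bw}\big(\alpha_t\langle\bg_t,\bz_t-\bw\rangle - \tfrac12\|(\bw-\bz_t)/\sqrt{\bgamma_{t-1}}\|^2\big) = \tfrac12\langle\bgamma_{t-1},\alpha_t^2\bg_t^2\rangle$ by Fenchel--Young (completing the square coordinatewise), and $\psi_{T-1}(\bx^{\star}) = \tfrac12\|(\bx_1-\bx^{\star})/\sqrt{\bgamma_{T-1}}\|^2$. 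Plugging back and dividing by $A_T = \sum_{t=1}^{T}\alpha_t$ gives the claim (in fact with an extra factor $\tfrac12$, which we discard).

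The main obstacle is the first step: pinning down the exact regularizer and, in particular, getting the $\bx_1$ re-centering together with the $A_t/A_{t+1}$ and $\alpha_{t+1}/A_{t+1}$ normalizations appearing in the $\beta_t$, $\bta_t$, and $\bx_{t+1}$ lines of Algorithm~\ref{alg:sgdm_0} to cooperate, so that the hidden sequence $\bz_t$ is simultaneously (i) a genuine dual-averaging iterate to which the regret bound applies verbatim and (ii) the sequence whose $\alpha$-weighted average is exactly $\bx_t$. Once this dictionary is fixed, the remaining two steps are the off-the-shelf primal-averaging and FTRL arguments.
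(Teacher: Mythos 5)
Your proposal is correct and follows essentially the same route as the paper: it identifies Algorithm~\ref{alg:sgdm_0} with Cutkosky's anytime online-to-batch conversion applied to FTRL/dual averaging with regularizers $\left\|(\bx-\bx_1)/\sqrt{\bgamma_{t-1}}\right\|^2$, and then combines the conversion with the standard FTRL regret bound. The only difference is cosmetic: you re-derive the two ingredients (the primal-averaging inequality and the FTRL regret bound) from scratch, and your direct computation $\bm_t=\bS_t/A_t$ is a cleaner version of the paper's induction establishing the same mapping.
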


The above theorem is very general and it gives rise to a number of different variations of the FTRL-based SGDM. In particular, we can instantiate it with the following choices.

First, we consider the most used polynomial stepsize $\frac{c}{\sqrt{t}}$ for convex and Lipschitz function, and the constant stepsize $\frac{c}{\sqrt{T}}$ if $T$ is given in advance. 
\begin{cor}
\label{cor:poly_constant_step}
Assume \textbf{(H3)} and set $\alpha_t = 1$ for all $t$. Algorithm~\ref{alg:sgdm_0} with either $\bgamma_{t-1}= \frac{c}{G\sqrt{t}} \cdot \mathbf{1}$ or $\bgamma_{t-1} = \frac{c}{G\sqrt{T}}\cdot \mathbf{1}$ guarantees 
\[
\E\left[f(\bx_T) \right] - f^{\star} \leq \frac{\|\bx_1 - \bx^{\star} \|^2 G }{c\sqrt{T}} + \frac{2cG}{\sqrt{T}}~. 
\]
\end{cor}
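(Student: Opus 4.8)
The plan is to instantiate Theorem~\ref{thm:sgdm_0} directly with the two stated choices and simplify. With $\alpha_t = 1$ for all $t$ we have $\sum_{t=1}^{T}\alpha_t = T$, and both choices of $\bgamma$ are scalar multiples of $\mathbf{1}$ that are non-increasing in the index (for the time-varying one $\frac{c}{G\sqrt{t}}$ is decreasing, the other is constant), so Algorithm~\ref{alg:sgdm_0} is well-defined and the theorem applies. The bound reduces to
\[
\E[f(\bx_T)] - f^\star \;\le\; \frac{1}{T}\,\E\!\left[\left\|\frac{\bx_1-\bx^\star}{\sqrt{\bgamma_{T-1}}}\right\|^2 + \sum_{t=1}^{T}\langle \bgamma_{t-1},\, \bg_t^2\rangle\right].
\]

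First I would handle the first ("bias-like") term. In both cases $\bgamma_{T-1} = \frac{c}{G\sqrt{T}}\cdot\mathbf{1}$ — for the time-varying choice this is simply $\frac{c}{G\sqrt{t}}$ evaluated at $t=T$ — so $\left\|\frac{\bx_1-\bx^\star}{\sqrt{\bgamma_{T-1}}}\right\|^2 = \frac{G\sqrt{T}}{c}\,\|\bx_1-\bx^\star\|^2$, and dividing by $T$ yields exactly $\frac{G\|\bx_1-\bx^\star\|^2}{c\sqrt{T}}$. Next I would bound the second ("variance-like") sum. Since each $\bgamma_{t-1}$ is a multiple of $\mathbf{1}$, $\sum_{t=1}^{T}\langle\bgamma_{t-1},\bg_t^2\rangle = \sum_{t=1}^{T}\gamma_{t-1}\|\bg_t\|^2$; taking expectations and using \textbf{(H3)}, namely $\E\|\bg_t\|^2\le G^2$, gives $\E\sum_{t=1}^{T}\gamma_{t-1}\|\bg_t\|^2 \le G^2\sum_{t=1}^{T}\gamma_{t-1}$.

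It then remains to estimate $\sum_{t=1}^{T}\gamma_{t-1}$ in each case. For the constant choice $\gamma_{t-1}=\frac{c}{G\sqrt{T}}$ this equals $T\cdot\frac{c}{G\sqrt{T}}$, so the term is at most $G^2\cdot\frac{cT}{G\sqrt{T}} = cG\sqrt{T}$. For the time-varying choice $\gamma_{t-1}=\frac{c}{G\sqrt{t}}$ it is $\frac{c}{G}\sum_{t=1}^{T}t^{-1/2}$, and here I would use the elementary integral estimate $\sum_{t=1}^{T}t^{-1/2}\le \int_0^{T}t^{-1/2}\,dt = 2\sqrt{T}$ to get a term at most $G^2\cdot\frac{c}{G}\cdot 2\sqrt{T} = 2cG\sqrt{T}$. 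In either case the second term, after dividing by $T$, is at most $\frac{2cG}{\sqrt{T}}$. Adding the two contributions gives the claimed bound. There is essentially no obstacle: the only nontrivial ingredient is the harmonic-type estimate $\sum_{t=1}^{T}t^{-1/2}\le 2\sqrt{T}$, and the rest is substitution and arithmetic — the real content of the corollary lives entirely in Theorem~\ref{thm:sgdm_0}.
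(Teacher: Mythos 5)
Your proof is correct and is exactly the argument the paper has in mind: the authors state that ``the proof of Corollary~\ref{cor:poly_constant_step} is immediate'' and omit it, and your direct instantiation of Theorem~\ref{thm:sgdm_0} with $\alpha_t=1$, the observation that $\bgamma_{T-1}=\frac{c}{G\sqrt{T}}\cdot\mathbf{1}$ in both cases, and the bound $\sum_{t=1}^{T}t^{-1/2}\le 2\sqrt{T}$ is precisely that immediate computation.
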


The above corollary tells that both of these two stepsizes give the optimal bound $O(\frac{1}{\sqrt{T}})$ for the last iterate. Next, we will show that if we use an adaptive\footnote{Even if widely used in the literature, it is a misnomer to call these stepsize ``adaptive'': an algorithm can be adaptive to some unknown quantities (if proved so), not the stepsizes.} stepsize,  Algorithm~\ref{alg:sgdm_0} gives a data-dependent convergence rate for the last iterate. We first consider a global version of the AdaGrad stepsize as in \citet{StreeterM10,LiO19, WardWB19}. 
\begin{cor}
\label{cor:adaptive_norm}
Assume \textbf{(H3')} and take  $\bgamma_t = \frac{\alpha \cdot \mathbf{1}}{\sqrt{\epsilon + \sum_{i=1}^{t} \alpha_i^2 \| \bg_i \|^2}}, \epsilon > 0, 1 \leq t \leq T$ and $\alpha_t = 1$. Then, Algorithm~\ref{alg:sgdm_0} guarantees
\begin{align*}
& \E\left[f(\bx_T) \right]  - f^{\star}
\leq \frac{1}{T} \left[\left(\frac{\|\bx_1 - \bx^{\star} \|^2}{\alpha} + 2\alpha\right) \E\sqrt{ \sum_{t=1}^{T} \| \bg_t \|^2  + \epsilon} 
+ \frac{\alpha G^2}{\sqrt{\epsilon}}\right]~.
\end{align*}
\end{cor}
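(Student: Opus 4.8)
The plan is to specialize Theorem~\ref{thm:sgdm_0} to the stated choices and then reduce everything to a one-dimensional AdaGrad-style summation bound. With $\alpha_t=1$ we have $\sum_{t=1}^T\alpha_t=T$ and $\bgamma_{t-1}=\frac{\alpha}{\sqrt{\epsilon+\sum_{i=1}^{t-1}\|\bg_i\|^2}}\mathbf{1}$ (reading $\bgamma_0=\frac{\alpha}{\sqrt{\epsilon}}\mathbf{1}$), so the right-hand side of Theorem~\ref{thm:sgdm_0} splits into $\frac{\|\bx_1-\bx^\star\|^2}{\alpha}\sqrt{\epsilon+\sum_{i=1}^{T-1}\|\bg_i\|^2}$ and $\alpha\sum_{t=1}^T\frac{\|\bg_t\|^2}{\sqrt{\epsilon+\sum_{i=1}^{t-1}\|\bg_i\|^2}}$. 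The first term is at most $\frac{\|\bx_1-\bx^\star\|^2}{\alpha}\sqrt{\epsilon+\sum_{i=1}^{T}\|\bg_i\|^2}$ since the running sum is nondecreasing; this accounts for the $\frac{\|\bx_1-\bx^\star\|^2}{\alpha}$ coefficient in the claim.

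The core step is to show, pathwise, that $\sum_{t=1}^T\frac{\|\bg_t\|^2}{\sqrt{\epsilon+\sum_{i=1}^{t-1}\|\bg_i\|^2}}\le 2\sqrt{\epsilon+\sum_{t=1}^T\|\bg_t\|^2}+\frac{G^2}{\sqrt{\epsilon}}$. I would peel off the $t=1$ term, which is $\frac{\|\bg_1\|^2}{\sqrt{\epsilon}}\le\frac{G^2}{\sqrt{\epsilon}}$ by \textbf{(H3')}, and for $t\ge2$ invoke the standard lemma bounding $\sum_t a_t/\sqrt{\sum_{i\le t}a_i}\le 2\sqrt{\sum_t a_t}$ (the AdaGrad-type lemma; a version of which lives in the appendix) applied to $a_t=\|\bg_t\|^2$. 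The subtle point is the index mismatch: the denominator carries $\sum_{i\le t-1}$, not $\sum_{i\le t}$, so the clean telescoping $a_t/\sqrt{\sum_{i\le t}a_i}\le 2(\sqrt{S_t}-\sqrt{S_{t-1}})$ does not apply verbatim, and one must either case-split on whether $\|\bg_t\|^2$ dominates the running sum or carry a correction term. This is precisely where \textbf{(H3')}, an almost-sure rather than in-expectation gradient bound, is needed, since the summation bound must hold pathwise before the expectation is taken.

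Finally, I would substitute the two bounds into Theorem~\ref{thm:sgdm_0}, divide by $\sum_{t=1}^T\alpha_t=T$, and take expectations, keeping $\E\sqrt{\epsilon+\sum_{t=1}^T\|\bg_t\|^2}$ intact rather than moving $\E$ inside the square root (which would be the wrong direction by Jensen); since $\bx^\star$ is a fixed minimizer this is legitimate. Collecting the coefficients of $\E\sqrt{\sum_t\|\bg_t\|^2+\epsilon}$ then yields exactly $\frac{\|\bx_1-\bx^\star\|^2}{\alpha}+2\alpha$, with the residual $\frac{\alpha G^2}{\sqrt{\epsilon}}$ sitting outside, matching the stated inequality. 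I expect the only real obstacle to be the shifted-denominator AdaGrad lemma; everything else is bookkeeping.
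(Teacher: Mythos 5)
Your proposal is correct and follows essentially the same route as the paper: specialize Theorem~\ref{thm:sgdm_0}, bound the regularizer term by monotonicity of the running sum, and handle the shifted-denominator adaptive sum pathwise before taking expectations. The "correction term" you anticipate for the index mismatch is exactly what the paper's Lemma~\ref{lemma:sum_intergral_bounds_extra_term} (from Gaillard et al.) provides, yielding the additive $\alpha G^2/\sqrt{\epsilon}$ via the almost-sure bound \textbf{(H3')}.
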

We also state a result for the coordinate-wise AdaGrad stepsizes~\citep{McMahanS10,DuchiHS10}.
\begin{cor}
\label{cor:adaptive_coord}
Assume \textbf{(H3'')}  and set $\bgamma_t = \frac{\alpha }{\sqrt{\epsilon + \sum_{i=1}^{t} \alpha_i^2  \bg_i^2}}, \epsilon > 0, 1 \leq t \leq T$ and $\alpha_t = 1$. Then, Algorithm~\ref{alg:sgdm_0} guarantees
\begin{align*}
\E\left[f(\bx_T) \right] - f^{\star}
& \leq \frac{1}{T} \left[\left(\frac{\|\bx_1 - \bx^{\star} \|^2}{\alpha} + 2\alpha\right)  \sum_{j=1}^{d}\E \sqrt{\sum_{t=1}^{T} \bg_{t,j}^2 +\epsilon}
+ \frac{\alpha dG_{\infty}^2}{\sqrt{\epsilon}}\right]~.
\end{align*}
\end{cor}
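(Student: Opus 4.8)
The plan is to substitute the chosen parameters into the general bound of Theorem~\ref{thm:sgdm_0} and then control the two resulting terms coordinate by coordinate, exactly in the spirit of Corollaries~\ref{cor:poly_constant_step} and~\ref{cor:adaptive_norm}; the only genuinely new ingredient is a one-dimensional AdaGrad-type summation applied per coordinate. Since $\alpha_t=1$ for all $t$ we have $\sum_{t=1}^{T}\alpha_t=T$, and the $j$-th coordinate of $\bgamma_{t-1}$ equals $\alpha/\sqrt{\epsilon+\sum_{i=1}^{t-1}g_{i,j}^2}$ (with $\bgamma_0=(\alpha/\sqrt{\epsilon})\mathbf{1}$), so Theorem~\ref{thm:sgdm_0} reads
\[
\E\!\left[f(\bx_T)\right]-f^{\star}\le\frac1T\,\E\!\left[\frac1\alpha\sum_{j=1}^{d}(x_{1,j}-x^{\star}_j)^2\sqrt{\epsilon+\textstyle\sum_{i=1}^{T-1}g_{i,j}^2}\;+\;\alpha\sum_{j=1}^{d}\sum_{t=1}^{T}\frac{g_{t,j}^2}{\sqrt{\epsilon+\sum_{i=1}^{t-1}g_{i,j}^2}}\right].
\]

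For the initial-distance term I would first enlarge $\sum_{i=1}^{T-1}$ to $\sum_{i=1}^{T}$ under the square root, and then use the elementary inequality $\sum_j p_jq_j\le\big(\sum_j p_j\big)\big(\sum_j q_j\big)$ valid for nonnegative reals, with $p_j=(x_{1,j}-x^{\star}_j)^2$ and $q_j=\sqrt{\epsilon+\sum_{t=1}^{T}g_{t,j}^2}$. Since $\sum_j p_j=\norm{\bx_1-\bx^{\star}}^2$, this bounds the term by $\tfrac1\alpha\norm{\bx_1-\bx^{\star}}^2\sum_{j=1}^{d}\sqrt{\epsilon+\sum_{t=1}^{T}g_{t,j}^2}$, which produces the coefficient $\norm{\bx_1-\bx^{\star}}^2/\alpha$ in the statement.

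For the gradient term the core quantity, for each fixed coordinate $j$, is $\sum_{t=1}^{T}g_{t,j}^2\big/\sqrt{\epsilon+\sum_{i=1}^{t-1}g_{i,j}^2}$. Here I would invoke a summation lemma from the appendix of the form: for nonnegative $a_1,\dots,a_T$ with $a_t\le M$, one has $\sum_{t=1}^{T}a_t\big/\sqrt{\epsilon+\sum_{i=1}^{t-1}a_i}\le M/\sqrt{\epsilon}+2\sqrt{\epsilon+\sum_{t=1}^{T}a_t}$. Applying it with $a_t=g_{t,j}^2$ and $M=G_{\infty}^2$ --- which is the only place assumption \textbf{(H3'')} enters --- bounds the inner sum by $G_{\infty}^2/\sqrt{\epsilon}+2\sqrt{\epsilon+\sum_{t=1}^{T}g_{t,j}^2}$; summing over $j$ yields the $2\alpha\sum_j\sqrt{\cdot}$ contribution together with the additive $\alpha dG_{\infty}^2/\sqrt{\epsilon}$.

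To finish, I would pass the expectation inside by linearity, merge the two $\sqrt{\epsilon+\sum_tg_{t,j}^2}$ contributions into the single coefficient $\norm{\bx_1-\bx^{\star}}^2/\alpha+2\alpha$, and divide by $T$. The one nonroutine point is the summation lemma: because Theorem~\ref{thm:sgdm_0} involves $\bgamma_{t-1}$, the denominator carries $\sum_{i=1}^{t-1}$ rather than $\sum_{i=1}^{t}$, so the usual telescoping $\sum_t a_t/\sqrt{S_t}\le 2\sqrt{S_T}$ (with $S_t=\epsilon+\sum_{i=1}^{t}a_i$) cannot be used verbatim; the gap between $\sqrt{S_{t-1}}$ and $\sqrt{S_t}$ is absorbed by treating separately the indices with $a_t\le S_{t-1}$ and those with $a_t>S_{t-1}$ (the latter forcing $S$ to at least double, hence being few and geometrically decreasing in weight), and this is exactly what costs the extra $G_{\infty}^2/\sqrt{\epsilon}$ and forces a uniform bound on $\norm{\bg_t}_\infty$ instead of merely a second-moment bound. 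Everything else mirrors the scalar computation already carried out for Corollary~\ref{cor:adaptive_norm}.
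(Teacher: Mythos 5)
Your proposal is correct and follows essentially the paper's own route: instantiate Theorem~\ref{thm:sgdm_0} with $\alpha_t=1$, pull $\|\bx_1-\bx^{\star}\|^2$ out of the coordinate-wise distance term, and control the gradient term per coordinate via the shifted AdaGrad summation bound, which is precisely the appendix lemma (Lemma~\ref{lemma:sum_intergral_bounds_extra_term}) applied with $f(u)=u^{-1/2}$, $a_0=\epsilon$, $a_t=\bg_{t,j}^2$ and $A=G_{\infty}^2$. The only deviation is your parenthetical doubling-argument sketch for proving that summation lemma, which would give slightly worse constants than the paper's Abel-summation/integral-comparison proof; since you invoke the lemma as stated rather than re-deriving it, this does not affect the result.
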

The above two corollaries show that the convergence bound are adaptive to the stochastic gradients. In words, in the worst case (i.e., $\sum_{t=1}^{T} \| \bg_t \|^2 = O(T)$ and $\sum_{t=1}^{T-1} \bg_{t,j}^2 = O(T)$), the convergence rate is $O(\frac{1}{\sqrt{T}})$.  However, when the stochastic gradients are small or sparse, the rate could be much faster than $O(\frac{1}{\sqrt{T}})$. 
Moreover, the above results give very simple ways to obtain optimal convergence for the last iterate of first-order stochastic methods, that was still unclear if it could be obtained as discussed in \citet{JainNN19,JainNN21}.

Also, we now show that if in addition $f$ is smooth, the last iterate of FTRL-based momentum with the global adaptive stepsize of Corollary~\ref{cor:adaptive_norm} gives adaptive rates of convergence that interpolate between $O(\frac{1}{\sqrt{T}})$ and $O(\frac{\ln T}{T})$.
\begin{cor}
\label{cor:ada_smooth}
Assume \textbf{(H1)}. Then, under the same assumption and parameter setting of Corollary~\ref{cor:adaptive_norm},  Algorithm~\ref{alg:sgdm_0} guarantees
\begin{align*}
\E &\left[f(\bx_T)\right]  - f^{\star}
\leq \frac{C}{T} \left(\sqrt{\epsilon + 4L^2C^2 \ln^2 T + 4LC \sqrt{\epsilon}\ln T +\frac{2\alpha G^2}{\sqrt{\epsilon}}  } + \frac{\alpha G^2}{\sqrt{\epsilon}} \right)+ \frac{\sqrt{2}C \sigma}{\sqrt{T}}~.
\end{align*}
where $C  \triangleq \left( \frac{\| \bx_1 - \bx^{\star} \|^2 }{\alpha} + 2\alpha\right)$. 
\end{cor}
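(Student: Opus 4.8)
The plan is to bootstrap Corollary~\ref{cor:adaptive_norm} using the additional smoothness assumption \textbf{(H1)}. Write $C\defeq\frac{\|\bx_1-\bx^\star\|^2}{\alpha}+2\alpha$ and $A_\tau\defeq\E\sqrt{\epsilon+\sum_{t=1}^{\tau}\|\bg_t\|^2}$. Corollary~\ref{cor:adaptive_norm} already yields $\E[f(\bx_T)]-f^\star\le\frac{C}{T}A_T+\frac{\alpha G^2}{T\sqrt\epsilon}$, so the whole task reduces to bounding the adaptive quantity $A_T$ by something of order $LC\ln T+\sigma\sqrt T$, up to $\epsilon$-dependent lower-order terms.

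The first ingredient turns gradient norms into suboptimality gaps. Decomposing $\bg_t=\nabla f(\bx_t)+(\bg_t-\nabla f(\bx_t))$, using $\|a+b\|^2\le2\|a\|^2+2\|b\|^2$, taking conditional expectations, and combining \textbf{(H2)} with the self-bounding inequality of $L$-smooth functions $\|\nabla f(\bx_t)\|^2\le2L(f(\bx_t)-f^\star)$, one gets $\E\|\bg_t\|^2\le4L\,\E[f(\bx_t)-f^\star]+2\sigma^2$ for $t\ge2$ (and $\E\|\bg_1\|^2\le G^2$ by \textbf{(H3')}). Jensen's inequality applied to the concave map $\sqrt{\cdot}$ then gives $A_T\le\sqrt{\epsilon+G^2+2T\sigma^2+4L\sum_{t=2}^{T}\E[f(\bx_t)-f^\star]}$.

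The second, crucial ingredient bounds $\sum_{t}\E[f(\bx_t)-f^\star]$. Since the stepsizes $\bgamma_t$ and the averaging weights of Algorithm~\ref{alg:sgdm_0} never refer to the horizon, the analysis behind Corollary~\ref{cor:adaptive_norm} applies up to any time $\tau\le T$, giving $\E[f(\bx_\tau)]-f^\star\le\frac{C}{\tau}A_\tau+\frac{\alpha G^2}{\tau\sqrt\epsilon}$; as $\tau\mapsto A_\tau$ is non-decreasing, summing over $\tau$ and using $\sum_{\tau\le T}\frac1\tau\le1+\ln T$ yields $\sum_{t}\E[f(\bx_t)-f^\star]\le(1+\ln T)\big(CA_T+\frac{\alpha G^2}{\sqrt\epsilon}\big)$. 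Substituting this into the bound of the previous paragraph produces a self-referential quadratic inequality $A_T^2\le bA_T+c$ with $b=\Theta(LC\ln T)$ and $c=\epsilon+G^2+2T\sigma^2+\Theta(\frac{L\alpha G^2\ln T}{\sqrt\epsilon})$. Solving it via $x^2\le bx+c\Rightarrow x\le b+\sqrt c$, peeling off $\sqrt{2T\sigma^2}=\sigma\sqrt{2T}$ with $\sqrt{x+y}\le\sqrt x+\sqrt y$, folding the $\Theta(LC\ln T)$ term back under a single root by completing the square with the $\sqrt\epsilon$ contribution, and plugging into $\E[f(\bx_T)]-f^\star\le\frac{C}{T}A_T+\frac{\alpha G^2}{T\sqrt\epsilon}$ gives the claimed bound, with the dominant term $\frac{C}{T}\sqrt{2T\sigma^2}=\frac{\sqrt2\,C\sigma}{\sqrt T}$ coming out as stated.

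The main obstacle is precisely this circular dependence — $A_T$ is controlled through $\sum_t\E[f(\bx_t)-f^\star]$, which is itself controlled through $A_T$ — resolved by the quadratic-inequality step. The delicate point is that one must feed the \emph{adaptive, horizon-free} bound of Corollary~\ref{cor:adaptive_norm} into the recursion at every intermediate time rather than the crude estimate $\sum_t\|\bg_t\|^2\le G^2T$: only then does the recursion close with logarithmic rather than polynomial overhead, which is exactly what makes the interpolation-regime rate $\frac{\ln T}{T}$ (instead of $\frac{1}{\sqrt T}$) appear when $\sigma=0$. The remaining work — tracking the $\epsilon$-dependence through the AdaGrad telescoping sum and deciding where to split sums of square roots so that the constants match the statement — is routine bookkeeping.
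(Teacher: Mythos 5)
Your proposal follows the paper's proof essentially step for step: convert $\E\sum_t\|\bg_t\|^2$ into suboptimality gaps via the self-bounding property of $L$-smooth functions, feed the horizon-free bound of Corollary~\ref{cor:adaptive_norm} back in at every intermediate time so that $\sum_{\tau\le T}1/\tau$ produces the $\ln T$, and close the circular dependence by solving the quadratic inequality in $\sqrt{\epsilon+\E\sum_t\|\bg_t\|^2}$. The one deviation is in passing from $\E\|\bg_t\|^2$ to $\E\|\nabla f(\bx_t)\|^2$: the paper uses the exact bias--variance identity $\E_t\|\bg_t\|^2=\|\nabla f(\bx_t)\|^2+\E_t\|\bg_t-\nabla f(\bx_t)\|^2$ (the cross term vanishes by unbiasedness), whereas your crude $\|a+b\|^2\le2\|a\|^2+2\|b\|^2$ costs a factor of $2$ that propagates to roughly $16L^2C^2\ln^2T$ in place of the stated $4L^2C^2\ln^2T$ --- not a flaw in the argument, but it is more than bookkeeping if one wants the exact constants of the corollary.
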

Observe that when $\sigma = 0$, namely when there is no noise on the gradients, the rate of $O(\frac{\ln T }{T})$ is obtained. 
As far as we know, the above theorems are the first convergence guarantees for the last iterate of momentum algorithms with adaptive learning rates in unconstrained convex optimization.

\subsection{Convergence Rate in Interpolation Regime}
\label{sec:interpolation}

Now we assume that $F(\bx)=\E_{\xi} [f(\bx,\xi)]$ and that the stochastic gradient is calculated drawing one function in each time step and calculating its gradient: $\bg_t = \nabla f(\bx_t,\xi_t)$. In this scenario, it makes sense to consider the \emph{interpolation} condition~\citep{NeedellSW15,MaBB18}
\begin{equation}
\label{eq:interpolation}
\bx^\star \in \argmin_{\bx} \ F(\bx) \Rightarrow \bx^\star \in \argmin_{\bx} \ f(\bx,\xi), \ \forall \xi~.
\end{equation}
This condition says that the problem is ``easy'', in the sense that all the functions in the expectation share the same minimizer. This case morally corresponds to the case in which there is no noise on the stochastic gradients. However, this condition seems weaker because it says that only in the optimum the gradient is exact and noisy everywhere else.
We will also assume that each function $f(\bx,\xi)$ is $L$-smooth in the first argument.
\begin{theorem}
\label{thm:interpolation}
Assume \textbf{(H1) (H3')}. Then, under the interpolation assumption in~\eqref{eq:interpolation}, Algorithm~\ref{alg:sgdm_0} with $\bgamma_t  = \frac{\alpha \cdot \mathbf{1}}{\sqrt{\epsilon + \sum_{i=1}^{t} \alpha_i^2 \| \bg_i \|^2}}, \epsilon>0$ guarantees 
\begin{align*}
\E\left[ F(\bx_T)\right] - F(\bx^{\star}) 
\leq \frac{C}{T} \left(\sqrt{\epsilon + 4L^2C^2 \ln^2 T + 4LC \sqrt{\epsilon}\ln T +\frac{2\alpha G^2}{\sqrt{\epsilon}} } + \frac{\alpha G^2}{\sqrt{\epsilon}} \right)~.
\end{align*}
where $C  \triangleq \left( \frac{\| \bx_1 - \bx^{\star} \|^2 }{\alpha} + 2\alpha\right)$. 
\end{theorem}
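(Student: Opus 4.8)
The plan is to run the same argument as in the proof of Corollary~\ref{cor:ada_smooth}, but to exploit the interpolation condition~\eqref{eq:interpolation} — which is strictly stronger than ``$\sigma=0$'' — so that the recursion on the sum of squared gradient norms closes with a purely polylogarithmic bound. Since $\bg_t=\nabla f(\bx_t,\xi_t)$ is an unbiased stochastic gradient of $F$ at $\bx_t$, Corollary~\ref{cor:adaptive_norm} applies with $F$ playing the role of $f$; and because with $\alpha_t=1$ the stepsize $\bgamma_t$ depends only on the gradients observed so far and never on the total number of iterations, Algorithm~\ref{alg:sgdm_0} is genuinely anytime, so the corollary holds for every prefix: for all $\tau\in\{1,\dots,T\}$,
\[
\E[F(\bx_\tau)] - F(\bx^{\star}) \;\le\; \frac{C}{\tau}\,\E\sqrt{\epsilon + \sum_{t=1}^{\tau}\|\bg_t\|^2}\; + \;\frac{\alpha G^2}{\sqrt{\epsilon}\,\tau}~.
\]
Hence the whole theorem reduces to showing $A_T\triangleq\E\sum_{t=1}^{T}\|\bg_t\|^2=O(L^2C^2\ln^2 T)$: given that, one takes $\tau=T$ above and uses Jensen's inequality $\E\sqrt{\epsilon+\sum_t\|\bg_t\|^2}\le\sqrt{\epsilon+A_T}$.

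The core estimate is the self-bounding step, and it is the single place where interpolation is used. Under \textbf{(H1)} every $f(\cdot,\xi)$ is $L$-smooth, and by~\eqref{eq:interpolation} the point $\bx^{\star}\in\argmin_{\bx}F(\bx)$ is also a global minimizer of each $f(\cdot,\xi)$; the standard bound $\|\nabla g(\bx)\|^2\le 2L(g(\bx)-\min g)$ for $L$-smooth $g$ then yields $\|\nabla f(\bx_t,\xi)\|^2\le 2L\bigl(f(\bx_t,\xi)-f(\bx^{\star},\xi)\bigr)$, and averaging over $\xi$ (with $\xi_t$ independent of $\bx_t$) gives $\E_t\|\bg_t\|^2\le 2L\bigl(F(\bx_t)-F(\bx^{\star})\bigr)$. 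Summing over $t$ and taking expectations,
\[
A_T \;\le\; 2L\sum_{\tau=1}^{T}\bigl(\E[F(\bx_\tau)]-F(\bx^{\star})\bigr)~.
\]

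Now I would bootstrap. Substituting the per-prefix bound into the last display, bounding $\sqrt{\epsilon+\sum_{t\le\tau}\|\bg_t\|^2}\le\sqrt{\epsilon+\sum_{t\le T}\|\bg_t\|^2}$, using $\sum_{\tau=1}^{T}1/\tau\le 1+\ln T$, and applying Jensen once more gives the self-referential inequality
\[
A_T \;\le\; 2L(1+\ln T)\left(C\sqrt{\epsilon+A_T}\; +\; \frac{\alpha G^2}{\sqrt{\epsilon}}\right)~.
\]
Reading this as a quadratic inequality in $\sqrt{\epsilon+A_T}$, solving it, and tidying up with $\sqrt{a+b}\le\sqrt a+\sqrt b$ gives $\epsilon+A_T\le \epsilon+4L^2C^2\ln^2 T+4LC\sqrt{\epsilon}\ln T+\tfrac{2\alpha G^2}{\sqrt{\epsilon}}$ up to the bookkeeping of lower-order terms; plugging this back into the $\tau=T$ prefix bound (and absorbing its $\tfrac{\alpha G^2}{\sqrt{\epsilon}\,T}$ tail) produces the claimed estimate.

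The main obstacle is the bootstrap itself: it is what turns the generic $O(1/\sqrt T)$ guarantee into the fast $O(\ln T/T)$ rate, and it relies on two things that have to be set up carefully — that the algorithm is anytime, so that the adaptive bound is available at every horizon $\tau$ and not merely at $T$; and that the self-bounding inequality is applied to the individual random functions $f(\cdot,\xi)$ rather than to $F$, which is precisely why one needs $L$-smoothness of each $f(\cdot,\xi)$ and the full-strength interpolation condition~\eqref{eq:interpolation}. One could instead reopen the online-to-batch/FTRL analysis behind Theorem~\ref{thm:sgdm_0} to extract $\sum_{\tau}\bigl(\E[F(\bx_\tau)]-F(\bx^{\star})\bigr)$ directly, but invoking the anytime property is cleaner.
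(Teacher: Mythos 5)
Your proposal is correct and follows essentially the same route as the paper's proof: apply the adaptive-stepsize bound at every prefix $\tau$ (the anytime property the paper uses implicitly when it ``uses \eqref{eq:interpolation_bound} on each $t$''), use $L$-smoothness of each $f(\cdot,\xi)$ together with interpolation to get the self-bounding estimate $\E\sum_t\|\bg_t\|^2\le 2L\sum_t(\E[F(\bx_t)]-F(\bx^{\star}))$, and close the resulting quadratic inequality in $\sqrt{\epsilon+\E\sum_t\|\bg_t\|^2}$ before substituting back at $\tau=T$. The only differences are bookkeeping-level (e.g., $\sum_\tau 1/\tau\le 1+\ln T$ versus the paper's $\ln T$), which do not affect the argument.
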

To the best of our knowledge, this is the first convergence rate for the last iterate of momentum methods in the interpolation setting. 

\subsection{Proofs}

Before presenting the proofs of our convergence rates, we revisit the Online-to-Batch algorithm (Algorithm~\ref{alg:online_to_batch}) by \citet{Cutkosky19}, which introduce a modification to any online learning algorithm to obtain a guarantee on the last iterate in the stochastic convex setting. 
\begin{lemma}{\citep[Theorem~1]{Cutkosky19}}
\label{lemma:ashok}
Assume \textbf{(H2)}.
Then, for all $\bx^{\star} \in D$, Algorithm~\ref{alg:online_to_batch} guarantees
\begin{equation}
\label{eq:conversion}
\E [f(\bx_T) ]  - f^{\star}
\leq \E \left[\frac{R_T (\bx^{\star})}{\sum_{t=1}^{T} \alpha_t}\right]~. 
\end{equation}
\end{lemma}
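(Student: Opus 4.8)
The plan is to prove \eqref{eq:conversion} as a weighted \emph{anytime} online-to-batch conversion, in which the averaged iterate $\bx_t$ is exactly the point fed to the gradient oracle. Write $S_t \defeq \sum_{i=1}^t \alpha_i$, let $\bz_t$ denote the online learner's iterates underlying Algorithm~\ref{alg:online_to_batch}, and recall that $\bx_t = \frac{1}{S_t}\sum_{i=1}^t \alpha_i \bz_i$, equivalently $S_t \bx_t = S_{t-1}\bx_{t-1} + \alpha_t \bz_t$, so that the weighted linear regret is $R_T(\bx^{\star}) = \sum_{t=1}^T \alpha_t \inn{\bg_t}{\bz_t - \bx^{\star}}$. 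First I would record the geometric identity linking the learner's iterate to the running average: from the recursion, $\bx_t - \bx_{t-1} = \frac{\alpha_t}{S_t}(\bz_t - \bx_{t-1})$, and hence $\bx_t - \bz_t = -\frac{S_{t-1}}{\alpha_t}(\bx_t - \bx_{t-1})$. This identity is what will make the discrepancy between evaluating the gradient at $\bx_t$ rather than at $\bz_t$ collapse into a telescoping sum.

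Next I would run a per-round convexity argument. For each $t$, convexity of $f$ gives $f(\bx_t) - f(\bx^{\star}) \le \inn{\nabla f(\bx_t)}{\bx_t - \bx^{\star}}$, which I split as $\inn{\nabla f(\bx_t)}{\bz_t - \bx^{\star}} + \inn{\nabla f(\bx_t)}{\bx_t - \bz_t}$. The first summand is a deterministic version of the per-round regret; the second is controlled by the identity above together with a second use of convexity, $\inn{\nabla f(\bx_t)}{\bx_t - \bx_{t-1}} \ge f(\bx_t) - f(\bx_{t-1})$, which turns $\alpha_t \inn{\nabla f(\bx_t)}{\bx_t - \bz_t}$ into $S_{t-1}\big(f(\bx_{t-1}) - f(\bx_t)\big)$. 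Combining these and multiplying the convexity bound by $\alpha_t$ yields
\[
S_t f(\bx_t) - S_{t-1} f(\bx_{t-1}) - \alpha_t f(\bx^{\star}) \le \alpha_t \inn{\nabla f(\bx_t)}{\bz_t - \bx^{\star}}.
\]
Summing over $t = 1,\dots,T$ and using $S_0 = 0$ telescopes the left-hand side to $S_T\big(f(\bx_T) - f(\bx^{\star})\big)$.

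Finally I would take expectations. Since $\bx_t$ and $\bz_t$ depend only on $\bg_1,\dots,\bg_{t-1}$ and are therefore measurable with respect to the past, unbiasedness $\E_t[\bg_t] = \nabla f(\bx_t)$ lets me replace $\nabla f(\bx_t)$ by $\bg_t$ inside each expectation, so that $\E\big[\sum_t \alpha_t \inn{\nabla f(\bx_t)}{\bz_t - \bx^{\star}}\big] = \E[R_T(\bx^{\star})]$. Because the weights $\alpha_t$, and hence $S_T$, are deterministic, dividing the telescoped inequality by $S_T = \sum_{t=1}^T \alpha_t$ gives exactly \eqref{eq:conversion}. The main obstacle, and the genuinely clever part of the argument, is the handling of the gap term $\inn{\nabla f(\bx_t)}{\bx_t - \bz_t}$: evaluating the gradient at the average $\bx_t$ rather than at $\bz_t$ is precisely what produces a \emph{telescoping} rather than an accumulating error, and is what upgrades an averaged guarantee into a last-iterate/anytime one. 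I would also take care that the measurability required for the unbiasedness step is exactly what the oracle model of Section~\ref{sec:set} provides, and note in passing that this inequality in fact uses only unbiasedness of $\bg_t$, not the full variance bound in \textbf{(H2)}.
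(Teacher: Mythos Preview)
Your proposal is correct and is essentially the standard argument from \citet{Cutkosky19}; the paper itself does not give a proof of this lemma but simply cites that reference. Your telescoping via $S_t f(\bx_t)-S_{t-1}f(\bx_{t-1})$, obtained from two applications of convexity together with the identity $\alpha_t(\bx_t-\bz_t)=-S_{t-1}(\bx_t-\bx_{t-1})$, is exactly the mechanism in the original proof, and your observation that only unbiasedness of $\bg_t$ (not the full \textbf{(H2)}) is needed is also accurate.
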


\begin{algorithm}[t]
\caption{Anytime Online-to-Batch~\citep{Cutkosky19}}
\label{alg:online_to_batch}
\begin{algorithmic}[1]
\STATE \textbf{Input:} Online learning algorithm $\mathcal{A}$ with convex domain D, $\alpha_1, ..., \alpha_T$, with $\alpha_1 > 0$.
\STATE Get Initial point $\bw_1$ from $\mathcal{A}$
\FOR{$t = 1, \dots, T$}
\STATE $\bx_t = \frac{\sum_{i=1}^{t} \alpha_i \bw_i }{\sum_{i=1}^{t} \alpha_i }$
\STATE Play $\bx_t$, receive subgradient $\bg_t$
\STATE Send $\ell_t (\bx) = \langle \alpha_t \bg_t, \bx \rangle$ to $\mathcal{A}$ as the $t$th loss
\STATE Get $\bw_{t+1}$ from $\mathcal{A}$
\ENDFOR
\end{algorithmic}
\end{algorithm}

\begin{algorithm}[t]
\caption{Anytime Online-to-Batch with FTRL}
\label{alg:online_to_batch_ftrl}
\begin{algorithmic}[1]
\STATE \textbf{Input:} $\alpha_1, ..., \alpha_T$, with $\alpha_t > 0$. $0 < \gamma_{t+1} \leq \gamma_t$. 
\STATE Initialize  $\bw_1$
\FOR{$t = 1, \dots, T$}
\STATE $\bx_t = \frac{\sum_{i=1}^{t} \alpha_i \bw_i }{\sum_{i=1}^{t} \alpha_i }$
\STATE Play $\bx_t$, receive subgradient $\bg_t$
\STATE $\bw_{t+1} = \bw_1 - \bgamma_t \sum_{i=1}^{t} \alpha_i \bg_i$
\ENDFOR
\end{algorithmic}
\end{algorithm}

Set $\psi_t({\bx})=\| \frac{\bx_1 - \bx}{\sqrt{\bgamma_{t-1}}} \|^2, 1\leq t\leq T$ as the regularizers of FTRL, where $\bgamma_{t+1} \leq \bgamma_t$ and $\bgamma_0 > 0$. Then, we write FTRL with loss $\ell_t(\bw) = \langle \alpha_t \bg_t, \bw \rangle$ as
\[
\bw_t
\in \argmin_{\bw \in \R^d} \ \psi_t (\bw) + \sum_{i=1}^{t-1} \langle \alpha_i \bg_i , \bw \rangle
= \bw_1 - \bgamma_{t-1}\sum_{i=1}^{t-1} \alpha_i \bg_i~. 
\]

We then plug FTRL into Algorithm~\ref{alg:online_to_batch} and it gives Algorithm~\ref{alg:online_to_batch_ftrl}.
Hence, using the well-known regret upper bound of FTRL (Lemma~\ref{lemma:ftrl} in the Appendix~\ref{sec:lemma_ftrl}), we get the following Lemma.
\begin{lemma}
\label{lemma: online_to_batch_ftrl}
Under the same setting with Lemma~\ref{lemma:ashok}, Algorithm~\ref{alg:online_to_batch_ftrl} guarantees
\begin{align*}
\E&\left[f(\bx_T) \right]  - f^{\star}
\leq \frac{1}{\sum_{t=1}^{T} \alpha_t} \E \left[\left\| \frac{\bu - \bx_1 }{ \sqrt{\bgamma_{T-1}}}  \right\|^2+ \sum_{t=1}^{T}\langle \bgamma_{t-1}, \alpha_t^2 \bg_t^2 \rangle\right]~. 
\end{align*}
\end{lemma}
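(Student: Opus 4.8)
The statement is a composition of two black boxes that are both already in hand: Cutkosky's anytime online-to-batch conversion (Lemma~\ref{lemma:ashok}) and the standard regret bound for FTRL with time-varying regularizers (Lemma~\ref{lemma:ftrl} in the Appendix). First I would apply Lemma~\ref{lemma:ashok} to Algorithm~\ref{alg:online_to_batch_ftrl}: since the online subroutine is FTRL run over $D=\R^d$, for any comparator $\bu\in\R^d$ we get $\E[f(\bx_T)]-f^\star \le \frac{1}{\sum_{t=1}^T\alpha_t}\,\E[R_T(\bu)]$, where $R_T(\bu)=\sum_{t=1}^T \langle \alpha_t\bg_t,\bw_t-\bu\rangle$ is the linear regret of the FTRL iterates $\bw_t$ against the losses $\ell_t(\bx)=\langle\alpha_t\bg_t,\bx\rangle$. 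So it suffices to bound $R_T(\bu)$ \emph{pathwise} (for every realization of the gradients), and then take expectations.

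Next I would record the structure of the regularizers $\psi_t(\bx)=\big\|\frac{\bx_1-\bx}{\sqrt{\bgamma_{t-1}}}\big\|^2$. Each $\psi_t$ is a diagonal quadratic with $\nabla^2\psi_t = 2\,\diag(1/\bgamma_{t-1})$, hence it is $2$-strongly convex with respect to the norm $\|\bv\|_{(t)}^2=\sum_j v_j^2/\gamma_{t-1,j}$, whose dual norm satisfies $\|\bv\|_{(t),*}^2=\sum_j \gamma_{t-1,j} v_j^2 = \langle\bgamma_{t-1},\bv^2\rangle$. Two easy facts must be checked to invoke the FTRL bound: (i) $\min_{\bx}\psi_1(\bx)=0$, attained at $\bx=\bx_1$; and (ii) $\psi_{t+1}(\bx)\ge\psi_t(\bx)$ for all $\bx$, which holds because $\bgamma_t$ is coordinatewise non-increasing, so $1/\bgamma_t$ is non-decreasing. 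I would also note that the argmin in the FTRL update has the closed form $\bw_{t+1}=\bx_1-\bgamma_t\sum_{i=1}^t\alpha_i\bg_i$, matching line~6 of Algorithm~\ref{alg:online_to_batch_ftrl}; in the adaptive instantiations $\bgamma_{t-1}$ depends only on $\bg_1,\dots,\bg_{t-1}$, so $\psi_t$ is admissible as an FTRL regularizer and the pathwise analysis applies verbatim.

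Then I would plug these into Lemma~\ref{lemma:ftrl}, which gives $R_T(\bu)\le \psi_T(\bu)-\min_\bx\psi_1(\bx)+\sum_{t=1}^T (\text{const})\cdot\|\nabla\ell_t\|_{(t),*}^2$. Using $\nabla\ell_t=\alpha_t\bg_t$, $\min_\bx\psi_1(\bx)=0$, and $\|\alpha_t\bg_t\|_{(t),*}^2=\langle\bgamma_{t-1},\alpha_t^2\bg_t^2\rangle$, this becomes $R_T(\bu)\le \big\|\frac{\bx_1-\bu}{\sqrt{\bgamma_{T-1}}}\big\|^2+\sum_{t=1}^T\langle\bgamma_{t-1},\alpha_t^2\bg_t^2\rangle$. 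Substituting into the conversion inequality from the first step and taking expectation yields exactly the claimed bound. (Here one should be careful to use the version of the FTRL lemma whose penalty term is $\psi_T(\bu)$ rather than $\psi_{T+1}(\bu)$, consistent with the index $\bgamma_{T-1}$ appearing in the statement; the constant multiplying the stability sum should likewise be read off from the Appendix lemma so that it matches the stated coefficient $1$.)

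\textbf{Main obstacle.} There is no deep difficulty here — the proof is essentially an assembly of Lemmas~\ref{lemma:ashok} and~\ref{lemma:ftrl}. The only place requiring genuine care is the bookkeeping for the time-varying (and, in the adaptive corollaries, data-dependent) regularizers: verifying the monotonicity $\psi_{t+1}\ge\psi_t$, getting the indices right ($\bgamma_{T-1}$ versus $\bgamma_T$, $\psi_T$ versus $\psi_{T+1}$), correctly computing the dual-norm stability terms for the diagonal quadratic regularizer, and noting that since the FTRL regret bound is pathwise, the final expectation (over $\bg_1,\dots,\bg_T$, on which $\bgamma_t$ may depend) introduces no measurability issues.
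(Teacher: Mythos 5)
Your proposal is correct and follows essentially the same route as the paper: plug FTRL with regularizers $\psi_t(\bx)=\bigl\|\frac{\bx_1-\bx}{\sqrt{\bgamma_{t-1}}}\bigr\|^2$ into the anytime online-to-batch conversion of Lemma~\ref{lemma:ashok} and invoke the FTRL regret bound of Lemma~\ref{lemma:ftrl}; your bookkeeping (monotonicity of $\psi_t$, closed form of $\bw_{t+1}$, dual-norm computation) just makes explicit what the paper leaves implicit, and the coefficient on the stability sum is consistent since the appendix lemma's $\tfrac12$ is only loosened to $1$ in the statement.
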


Now we prove the connection between the FTRL-based SGDM and Algorithm~\ref{alg:online_to_batch_ftrl}. 
\begin{proof}[Proof of Theorem~\ref{thm:sgdm_0}]
We prove that the updates of $\bx_t$ in Algorithm~\ref{alg:sgdm_0} can be one-to-one mapped to the updates of $\bx_t$ Algorithm~\ref{alg:online_to_batch_ftrl} when $\bw_1 = \bx_1$. 

The update of $\bx_t $ in Algorithm~\ref{alg:online_to_batch_ftrl} can be written as following: 
\begin{align*}
\bx_{t+1}
= \frac{\sum_{i=1}^{t}\alpha_i}{\sum_{i=1}^{t+1}\alpha_i} \bx_t + \frac{\alpha_{t+1}}{\sum_{i=1}^{t+1} \alpha_i} \bw_{t+1}
= \frac{\sum_{i=1}^{t}\alpha_i}{\sum_{i=1}^{t+1}\alpha_i} \bx_t + \frac{\alpha_{t+1}}{\sum_{i=1}^{t+1} \alpha_i} \left(\bw_1 -\bgamma_t \sum_{i=1}^{t} \alpha_i \bg_i \right)~.
\end{align*}
It is enough to prove that for any $t$, $\bta_t \bm_t = \frac{\alpha_{t+1}}{\sum_{i=1}^{t+1} \alpha_i} \left(\bgamma_t \sum_{i=1}^{t} \alpha_i \bg_i \right)$.
We claim it is true and prove it by induction. 

When $t=1$, it holds that $\bta_1 \bm_1= \frac{\alpha_2 \alpha_1}{\alpha_1 + \alpha_2} \bgamma_1 \bg_1$. 
Suppose it holds for $t= k-1, k \geq 2$. Then in the case of $t = k $, we have 
\begin{align*}
& \bta_k \bm_k \\
& =\left(\frac{\sum_{i=1}^{k-1} \alpha_i }{\sum_{i=1}^{k} \alpha_i} \bm_{k-1} + \frac{\alpha_k }{\sum_{i=1}^{k} \alpha_i } \bg_k \right) \cdot  \frac{\alpha_{k+1}  \sum_{i=1}^{k} \alpha_i }{\sum_{i=1}^{k+1} \alpha_i} \bgamma_k  \\
& = \left(\frac{\sum_{i=1}^{k-1} \alpha_i }{\sum_{i=1}^{k} \alpha_i} \left( \frac{1}{\eta_{k-1}}\frac{\alpha_{k}}{\sum_{i=1}^{k} \alpha_i} \bgamma_{k-1} \sum_{i=1}^{k-1} \alpha_i \bg_i \right) + \frac{\alpha_k }{\sum_{i=1}^{k} \alpha_i } \bg_k \right)\cdot \frac{\alpha_{k+1}  \sum_{i=1}^{k} \alpha_i }{\sum_{i=1}^{k+1} \alpha_i} \bgamma_k \\
& = \frac{\alpha_{k+1}  \sum_{i=1}^{k} \alpha_i }{\sum_{i=1}^{k+1} \alpha_i} \bgamma_k  \cdot 
\left(\frac{\sum_{i=1}^{k-1} \alpha_i }{\sum_{i=1}^{k} \alpha_i} \left(  \frac{ \sum_{i=1}^{k-1} \alpha_i \bg_i}{\sum_{i=1}^{k-1} \alpha_i }\right) + \frac{\alpha_k }{\sum_{i=1}^{k} \alpha_i } \bg_k \right)
 = \frac{\alpha_{k+1} }{\sum_{i=1}^{k+1} \alpha_i} \bgamma_k \sum_{i=1}^{k} \alpha_i  \bg_i~. 
\end{align*}
where in the first equation we used the definitions of $\eta_k$ and $\bm_k$ and in the second equality we used the induction step.  So we proved the above claim. Thus, we can directly use Lemma~\ref{lemma: online_to_batch_ftrl}. 
\end{proof}

The proof of Corollary 1 is immediate and we omit it, while the proofs of Corollaries 2-4 are standard and they are presented in the Appendix~\ref{sec:proofs_cor}.
Instead, here we show the proof of Theorem~\ref{thm:interpolation}.
\begin{proof}[Proof of Theorem~\ref{thm:interpolation}]
By Theorem~\ref{thm:sgdm_0}, we have 
\begin{align}
\E\left[F(\bx_T) \right] - F(\bx^{\star})
\leq \frac{2}{T}\left(\frac{\|\bx_1 - \bx^{\star} \|^2}{\alpha} + 2\alpha\right) \sqrt{ \E \sum_{t=1}^{T} \| \nabla f(\bx_t , \xi_t )\|^2 + \epsilon} + \frac{\alpha G^2}{\sqrt{\epsilon}}\label{eq:interpolation_bound}~. 
\end{align}
Under the interpolation condition and $L$-smoothness of the functions $f$, it satisfies that 
\begin{align*}
\E \sum_{t=1}^{T} \|\nabla f(\bx_t,\xi_t)\|^2 
\leq  2L \E \left[ \sum_{t=1}^{T} \left( f(\bx_t , \xi_t) - f(\bx^{\star}, \xi_t ) \right)\right]
\leq  2L  \sum_{t=1}^{T}\E  \left[ F(\bx_t )  \right]- F(\bx^{\star} )~.
\end{align*}
Use \eqref{eq:interpolation_bound} on each $t$ to get  
\begin{align*}
\sum_{t=1}^{T}\E  \left[ F(\bx_t )  \right]- F(\bx^{\star}) 
& \leq  \sum_{t=1}^{T}\frac{1}{t} \left[\left(\frac{\|\bx_1 - \bx^{\star} \|^2}{\alpha} + 2\alpha\right) \sqrt{ \E \sum_{i=1}^{t} \| \nabla f(\bx_i , \xi_i )\|^2 + \epsilon  } + \frac{\alpha G^2}{\sqrt{\epsilon}}\right]\\
& \leq \left(\frac{\|\bx_1 - \bx^{\star} \|^2}{\alpha} + 2\alpha\right)  \cdot \left(\sqrt{ \E \sum_{t=1}^{T} \| \nabla f(\bx_t, \xi_t )\|^2 + \epsilon} + \frac{\alpha G^2}{\sqrt{\epsilon}} \right)\ln T~.
\end{align*}
Then, we solve for $\E \sum_{t=1}^{T} \|\nabla f(\bx_t,\xi_t)\|^2 $ and get 
\begin{align*}
& \E \sum_{t=1}^{T} \|\nabla f(\bx_t,\xi_t)\|^2 \\
&  \leq 4L^2\left(\frac{\|\bx_1 - \bx^{\star} \|^2}{\alpha} + 2\alpha\right)^2 \ln^2 T 
 + 4L \sqrt{\epsilon}\left(\frac{\|\bx_1 - \bx^{\star} \|^2}{\alpha} + 2\alpha\right) \ln T + \frac{2\alpha G^2}{\sqrt{\epsilon}} ~.
\end{align*}
Using this expression in \eqref{eq:interpolation_bound}, we have the stated bound. 
\end{proof}

\section{Empirical Results}
\label{sec:exp}

\begin{figure*}[!t]
\centering
\includegraphics[width=0.32\textwidth]{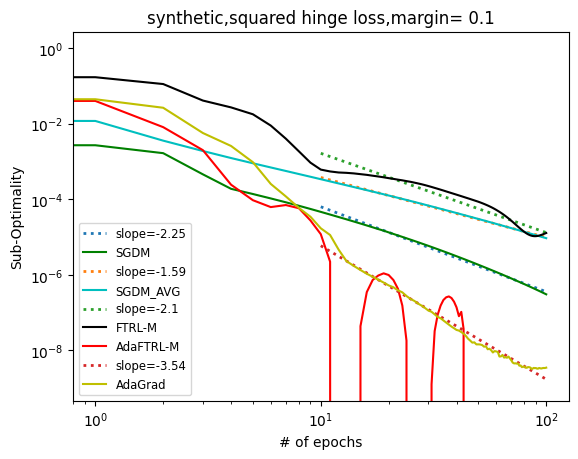}
\includegraphics[width=0.32\textwidth]{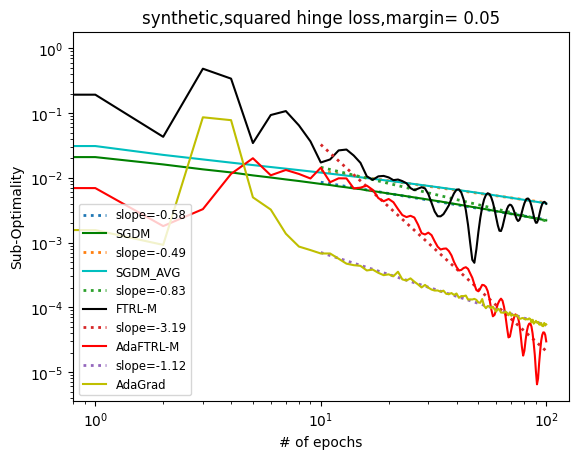}
\includegraphics[width=0.32\textwidth]{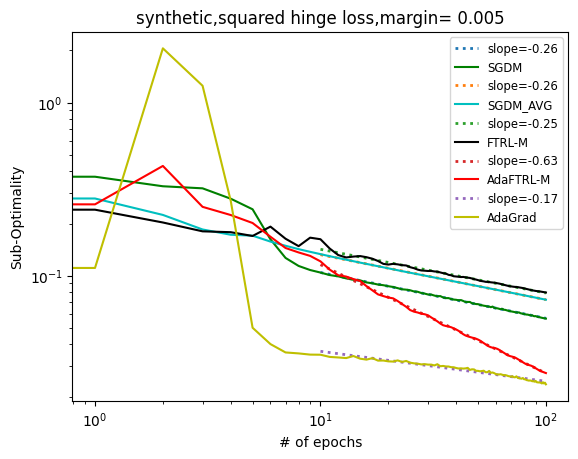}
\caption{Squared hinge loss for classification, objective value vs number of epoch.}
\label{fig:synth}
\end{figure*}
\begin{figure*}[!t]
\centering
\includegraphics[width=0.32\textwidth]{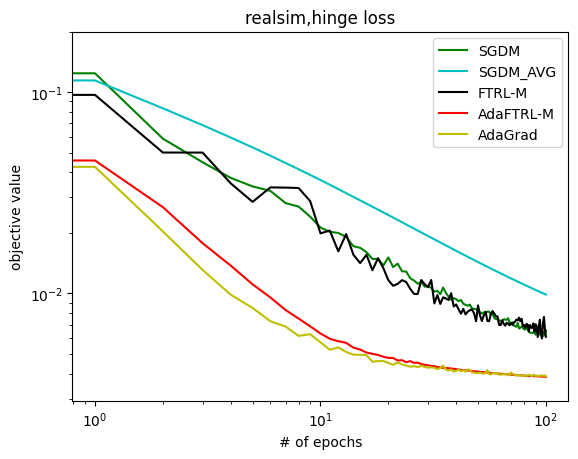}
\includegraphics[width=0.32\textwidth]{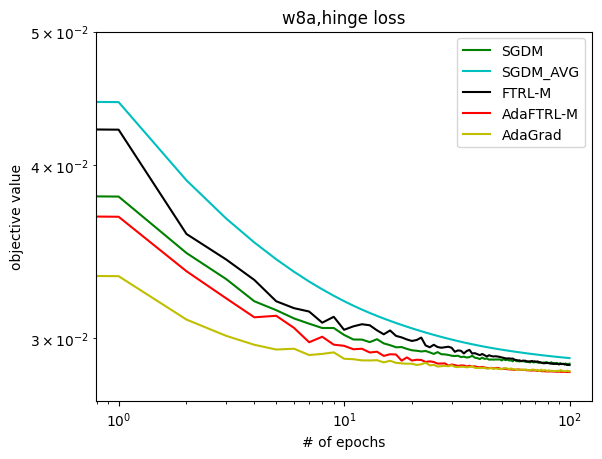}
\includegraphics[width=0.32\textwidth]{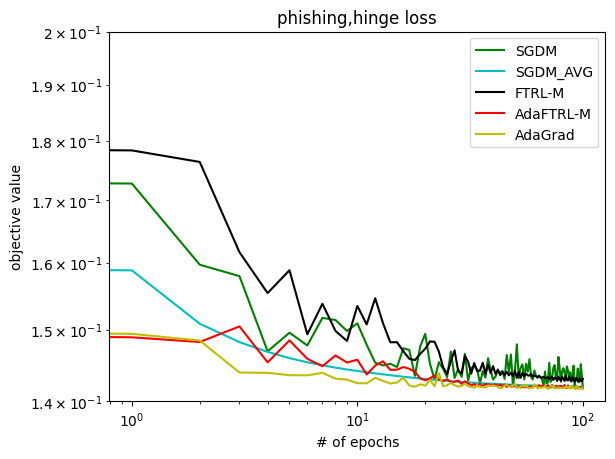}
\caption{Hinge loss for classification, objective value vs number of epoch.}
\label{fig:classification}
\end{figure*}

We have presented a family of FTRL-based SGDM algorithms, that exhibit optimal convergence of the last iteration. These algorithms are motivated by a new lower bound that shows that constant momentum SGDM is provably suboptimal to minimize convex Lipschitz functions. 
However, the theory guarantees only an improvement of $\ln T$, so it is unlikely to make a difference in practical applications. Yet, we also perform some experiments to show that FTRL-based momentum methods have also interesting empirical properties. 

We compare FTRL-M (Algorithm~\ref{alg:sgdm_0}, $\bgamma_t = \frac{c \cdot \mathbf{1}}{\sqrt{t}}$), AdaFTRL-M (Algorithm~\ref{alg:sgdm_0}, $\bgamma_t = \frac{\alpha}{\sqrt{\epsilon + \sum_{i=1}^{t} \alpha_i^2  \bg_i^2}}$)  with classic SGDM ($\beta = 0.9$), SGDM-AVG (averaged iterates of SGDM, $\beta = 0.9$), and AdaGrad~\citep{McMahanS10,DuchiHS10}. The initial stepsizes for all the algorithms were tuned with a fine grid-searching procedure.

\paragraph{Synthetic Data}
For the first experiment, we generate synthetic data and test the algorithms following the protocol in \citet{VaswaniBS19}. We generate a synthetic binary classification dataset with $n = 8000$ and the dimension $d = 100$. We make the data linearly separable with a margin, in which case the interpolation condition is satisfied. We train linear classifiers with the squared hinge loss: $f(\bw) = \sum_{i=1}^{n} \left(\max \left(0, 1-  \by_i \bw^T \bx_i\right)\right)^2$. Note that the loss function is smooth and $f(\bw^{\star}) = 0$. In this case, the optimal convergence rate is at least as fast as $1/T$.

We plot the suboptimality gap versus the number of epochs with different margin values in Figure~\ref{fig:synth}, in loglog plots. Also, we add a line to fit the curves, where the slopes represent the power of $t$. From Figure~\ref{fig:synth}, we observe that two adaptive algorithms AdaGrad and AdaFTRL-M bring faster convergence and AdaFTRL-M has the biggest slope in all the cases. Also, the performance of FTRL-M is on par with SGDM and SGDM-AVG.

\paragraph{Real Data}
We also test the algorithms on real datasets. We use classification datasets from the LIBSVM website~\citep{ChangL11}; \emph{real-sim}, \emph{w8a}, and \emph{phishing}.
The details of the datasets are in Appendix~\ref{sec:details_exp}.


We train linear classifiers with the hinge loss and no regularization: $f(\bw) = \sum_{i=1}^{n} (\max (0, 1-  \by_i \bw^T \bx_i))$. The stochastic gradients are obtained evaluating the subgradient on one example at the time. We repeat the experiments for 5 times for each algorithm and report the average of 5 repetitions.
We show the objective value versus the number of epochs in Figure~\ref{fig:classification}.

The results show that the algorithms with non-adaptive stepsizes tend to perform worse than the ones with adaptive stepsizes. 
Moreover, the performance of AdaFTRL-M is close to the last iterate of AdaGrad and sometimes outperforms all the other algorithms, especially in the last iterations.

\section{Conclusion}
\label{sec:conc}

We have presented an analysis of the convergence of the last iterate of SGDM in the convex setting. We prove for the first time through a lower bound the suboptimal convergence rate for the last iterate of SGDM with constant momentum after $T$ iterations. Moreover, we study a class of FTRL-based SGDM algorithms with increasing momentum and shrinking updates, of which the last iterate has optimal convergence rate without projections onto bounded domain nor knowledge of $T$. Furthermore, we present empirical results showing that FTRL-based SGDM with adaptive stepsize matches or outperforms the other similar algorithms in the last iterations.

In the future, we plan on studying the convergence in high probability of FTRL-based SGDM, similarly to the analysis in \citet{LiO20}.

\section*{Acknowledgements}
This material is based upon work supported by the National Science Foundation under the grants no. 1925930
	``Collaborative Research: TRIPODS Institute for Optimization and Learning'', 
	no. 2022446 ``Foundations of Data Science Institute'', and no. 2046096 ``CAREER:
	Parameter-free Optimization Algorithms for Machine Learning''.

\bibliography{../../../../learning}
\bibliographystyle{plainnat_nourl}
\appendix
\section{Lemma for the Proof of Theorem~\ref{thm:lower_bound}}
\label{sec:lemma_poly_bound}

\begin{lemma}
	\label{lemma:poly_bound}
	For any $1 \leq j \leq t \leq T$ and $0< \alpha \leq \frac{1}{2}$, we have $\frac{1}{T-j+1} \sum_{k=j+1}^{t} \frac{1}{j^{\alpha}} \leq  \frac{2}{T^{\alpha}}$. 
\end{lemma}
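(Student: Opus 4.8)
The plan is to control the sum by an integral and then reduce the claim to an elementary inequality. (Here the summand of interest is $1/k^{\alpha}$, matching the quantity $\sum_{k=j+1}^{t} 1/k^{\alpha}$ that appears in \eqref{eq:lower_alpha}.) First I would use that $x\mapsto x^{-\alpha}$ is decreasing on $[1,\infty)$, so for every integer $k\ge 2$ one has $k^{-\alpha}\le\int_{k-1}^{k} x^{-\alpha}\,dx$; summing over $k=j+1,\dots,t$ yields
\[
\sum_{k=j+1}^{t}\frac{1}{k^{\alpha}}\le\int_{j}^{t} x^{-\alpha}\,dx=\frac{t^{1-\alpha}-j^{1-\alpha}}{1-\alpha}\le\frac{T^{1-\alpha}-j^{1-\alpha}}{1-\alpha},
\]
where the last inequality uses $t\le T$ together with the monotonicity of $x\mapsto x^{1-\alpha}$ (and the empty-sum case $t=j$ is the trivial bound $0\le 0$).

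Since $0<\alpha\le\tfrac12$ gives $1-\alpha\ge\tfrac12$, i.e. $\tfrac{1}{1-\alpha}\le 2$, dividing the previous display by $T-j+1$ shows it suffices to prove
\[
\frac{T^{1-\alpha}-j^{1-\alpha}}{T-j+1}\le\frac{1}{T^{\alpha}}.
\]
Clearing the (positive) denominators, this is equivalent to $T^{\alpha}\bigl(T^{1-\alpha}-j^{1-\alpha}\bigr)\le T-j+1$, i.e. $T-T^{\alpha}j^{1-\alpha}\le T-j+1$, i.e. $j-1\le T^{\alpha}j^{1-\alpha}$. I would close this with the observation that, because $1\le j\le T$, we have $T^{\alpha}\ge j^{\alpha}$ and hence $T^{\alpha}j^{1-\alpha}\ge j^{\alpha}j^{1-\alpha}=j\ge j-1$, which finishes the proof.

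The step I expect to be the only subtle one is the choice of intermediate bound: one must retain the $-\,j^{1-\alpha}$ term in the integral estimate rather than crudely bounding $\sum_{k=j+1}^{t} k^{-\alpha}$ by $2T^{1-\alpha}$. That term is precisely what rescues the inequality in the regime where $j$ is close to $T$ and the denominator $T-j+1$ is small; once it is kept, the whole claim collapses to the one-line estimate $T^{\alpha}j^{1-\alpha}\ge j$.
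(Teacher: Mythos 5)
Your proof is correct. It also quietly fixes the typo in the statement (the summand should read $1/k^{\alpha}$, not $1/j^{\alpha}$), which is how the paper's own proof reads it as well. Both arguments open with the same integral comparison $\sum_{k=j+1}^{t}k^{-\alpha}\le\int_{j}^{t}x^{-\alpha}\,dx=\frac{t^{1-\alpha}-j^{1-\alpha}}{1-\alpha}$, but from there you diverge. The paper keeps the variable $t$: it rewrites $t^{1-\alpha}-j^{1-\alpha}$ as $\frac{t^{2-2\alpha}-j^{2-2\alpha}}{t^{1-\alpha}+j^{1-\alpha}}$, applies convexity of $x\mapsto x^{2-2\alpha}$ to get the intermediate bound $\sum_{k=j+1}^{t}k^{-\alpha}\le\frac{2(t-j)}{t^{\alpha}}$, and only then passes from $t$ to $T$ by checking that $x\mapsto\frac{x-j}{x^{\alpha}}$ is increasing. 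You instead pass to $T$ immediately (monotonicity of $x^{1-\alpha}$), pull out $\frac{1}{1-\alpha}\le 2$, and collapse the remaining inequality $\frac{T^{1-\alpha}-j^{1-\alpha}}{T-j+1}\le\frac{1}{T^{\alpha}}$ to the one-line weighted-AM--GM-type fact $T^{\alpha}j^{1-\alpha}\ge j^{\alpha}j^{1-\alpha}=j\ge j-1$. Your route is more elementary: it replaces the convexity step and the derivative computation with a single algebraic reduction, and in fact shows the bound holds with $T-j$ in place of $T-j+1$. What the paper's route buys is the reusable intermediate estimate $\sum_{k=j+1}^{t}k^{-\alpha}\le 2(t-j)/t^{\alpha}$, stated in terms of $t$ rather than $T$; for the lemma as used in Theorem~\ref{thm:lower_bound}, however, your version is fully sufficient.
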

\begin{proof}
	First, we observe that 
	\begin{align*}
		\sum_{k=j+1}^{t} \frac{1}{k^{\alpha}} 
		&\leq \int_{j}^{t}  \frac{1}{x^{\alpha}} dx 
		= \frac{t^{1-\alpha} - j^{1-\alpha}}{1-\alpha} 
		= \frac{1}{1-\alpha}\frac{t^{2-2\alpha} - j^{2-2\alpha}}{t^{1-\alpha} + j^{1-\alpha}} \\
		&\leq \frac{1}{1-\alpha}\frac{(2-2\alpha)t^{1-2\alpha}  (t-j)}{t^{1-\alpha} + j^{1-\alpha}} 
		\leq \frac{2(t-j)}{t^{\alpha}},
	\end{align*}
	where in the second inequality we used the convexity of $f(x) = x^{2-2\alpha}, 0 < \alpha \leq \frac{1}{2}$. 
	
	Then, we claim $\frac{1}{T-j+1}\frac{t-j}{t^{\alpha}} \leq \frac{1}{T^{\alpha}}$. 
	
	Let $g(x) = \frac{x- j}{x^{\alpha}}$. The derivative 
	$g'(x) = \frac{1-\alpha + \frac{j}{\alpha x}}{x^{\alpha}}$ is positive for all $x> 0 $ and $j \geq 0$. So 
	it satisfies that $\frac{t-j }{t^{\alpha}} \leq \frac{T-j}{T^{\alpha}}$, which implies the claim. 
\end{proof}

\section{Lemma for the Proof of Theorem~\ref{thm:sgdm_0}}
\label{sec:lemma_ftrl}

\begin{algorithm}[h]
\caption{Follow-the-Regularized-Leader on Linearized Losses}
\label{alg:ftrl}
\begin{algorithmic}[1]
\STATE \textbf{Input:} Regularizers $\psi_1, \dots, \psi_T: \R^d \to(-\infty, \infty ]$. 
\FOR{$t = 1, \dots, T$}
\STATE $\bw_t \in \argmin_{\bw \in \R^d} \ \psi_t (\bw) + \sum_{i=1}^{t-1} \langle \bg_i , \bw \rangle$
\STATE Receive $\ell_t:  \R^d \to(-\infty, \infty ]$ and pay $\ell_t (\bw_t)$
\STATE Set $\bg_t \in \partial \ell_t (\bw_t)$
\ENDFOR
\end{algorithmic}
\end{algorithm}
The following lemma is a well-known result for FTRL~\citep[see, e.g.,][]{Orabona19}.
\begin{lemma}
\label{lemma:ftrl}
Let $\ell_t$ a sequence of convex loss functions. Set the sequence of regularizers as $\psi_t({\bx})= \left\| \frac{\bx_1 - \bu}{\sqrt{\bgamma_{t-1}}} \right\|^2$, where $\bgamma_{t+1} \leq \bgamma_t, \ t=1, \dots, T$. Then, FTRL (Algorithm~\ref{alg:ftrl}) guarantees
\begin{align*}
\sum_{t=1}^{T} \ell_t (\bx_t ) - \ell_t(\bu)
\leq  \left\| \frac{\bu - \bx_1 }{ \sqrt{\bgamma_{T-1}}} \right \|^2 + \frac{1}{2} \sum_{t=1}^{T}  \langle \bgamma_{t-1},  \bg_t^2\rangle~.
\end{align*}
\end{lemma}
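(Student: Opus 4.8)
The plan is to reproduce the textbook analysis of Follow-the-Regularized-Leader with time-varying, separable quadratic regularizers, whose two engines are a telescoping (``strong FTRL'') identity and a per-round stability estimate based on strong convexity. By convexity of $\ell_t$ and $\bg_t\in\partial\ell_t(\bw_t)$ it suffices to bound the linearized regret $\sum_{t=1}^T\langle\bg_t,\bw_t-\bu\rangle$. Define the cumulative regularized objectives $F_t(\bw) = \psi_t(\bw) + \sum_{i=1}^{t-1}\langle\bg_i,\bw\rangle$, so that each $\bw_t$ is a minimizer of $F_t$ (also define $\bw_{T+1}$ as a minimizer of $F_{T+1}$), and extend the regularizer sequence by $\psi_{T+1} \defeq \psi_T$, which still satisfies $\psi_{T+1}\geq\psi_T$ pointwise. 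First I would establish the identity
\[
\sum_{t=1}^T\langle\bg_t,\bw_t-\bu\rangle
\;\leq\; \psi_{T+1}(\bu) - \psi_1(\bw_1) + \sum_{t=1}^T\big(\langle\bg_t,\bw_t\rangle + F_t(\bw_t) - F_{t+1}(\bw_{t+1})\big),
\]
which follows by writing $\sum_t\langle\bg_t,\bu\rangle = F_{T+1}(\bu) - \psi_{T+1}(\bu) \geq F_{T+1}(\bw_{T+1}) - \psi_{T+1}(\bu)$ and telescoping $\sum_{t=1}^T\big(F_t(\bw_t) - F_{t+1}(\bw_{t+1})\big) = F_1(\bw_1) - F_{T+1}(\bw_{T+1})$. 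Since $\bw_1 = \bx_1$ minimizes $\psi_1(\bw) = \|\frac{\bx_1 - \bw}{\sqrt{\bgamma_0}}\|^2$ we get $\psi_1(\bw_1) = 0 = F_1(\bw_1)$, while $\psi_{T+1}(\bu) = \psi_T(\bu) = \|\frac{\bu - \bx_1}{\sqrt{\bgamma_{T-1}}}\|^2$ is exactly the first term of the claimed bound.

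It remains to bound each per-round term $\delta_t \defeq \langle\bg_t,\bw_t\rangle + F_t(\bw_t) - F_{t+1}(\bw_{t+1})$. The monotonicity $\bgamma_t\leq\bgamma_{t-1}$ makes $\psi_{t+1}\geq\psi_t$ pointwise, hence $F_{t+1}(\bw)\geq F_t(\bw) + \langle\bg_t,\bw\rangle$ for all $\bw$, so $F_{t+1}(\bw_{t+1}) = \min_\bw F_{t+1}(\bw) \geq \min_\bw\big(F_t(\bw)+\langle\bg_t,\bw\rangle\big)$ and therefore $\delta_t \leq F_t(\bw_t) + \langle\bg_t,\bw_t\rangle - \min_\bw\big(F_t(\bw)+\langle\bg_t,\bw\rangle\big)$. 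The function $F_t$ has curvature only through the separable quadratic $\psi_t$, whose Hessian is the positive diagonal matrix $\mathrm{diag}(2/\bgamma_{t-1})$, so $F_t$ is strongly convex with respect to the weighted norm $\|\bv\|_{(t)}^2 = \sum_j v_j^2/\gamma_{t-1,j}$ up to the normalization constant. Applying the elementary inequality ``$\min_\bw\big(h(\bw)+\langle\bg,\bw\rangle\big) \geq \min_\bw h(\bw) + \langle\bg,\bw^\star\rangle - \tfrac12\|\bg\|_{(t),\star}^2$ for $h$ that is $1$-strongly convex w.r.t.\ $\|\cdot\|_{(t)}$ with minimizer $\bw^\star$'' to $h = F_t$ (minimizer $\bw_t$), together with the dual-norm computation $\|\bg_t\|_{(t),\star}^2 = \langle\bgamma_{t-1},\bg_t^2\rangle$, yields $\delta_t \leq \tfrac12\langle\bgamma_{t-1},\bg_t^2\rangle$.

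Summing over $t$ and combining with the first step gives $\sum_{t=1}^T\langle\bg_t,\bw_t-\bu\rangle \leq \|\frac{\bu-\bx_1}{\sqrt{\bgamma_{T-1}}}\|^2 + \tfrac12\sum_{t=1}^T\langle\bgamma_{t-1},\bg_t^2\rangle$, and since the left side dominates $\sum_t\ell_t(\bw_t)-\ell_t(\bu)$ this is the claimed bound. There is no genuine obstacle: this is the standard FTRL argument (a clean reference is \citet{Orabona19}), and the only care needed is bookkeeping --- defining the fictitious last regularizer so that the comparator term reads $\psi_T(\bu)$ rather than an undefined $\psi_{T+1}(\bu)$, invoking the monotonicity of $\bgamma$ exactly where the nonnegative increment $\psi_{t+1}-\psi_t$ is discarded, and tracking the normalization of $\psi_t$ so that its strong-convexity modulus matches the term $\langle\bgamma_{t-1},\bg_t^2\rangle$ appearing in the statement.
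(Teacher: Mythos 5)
Your proof is correct and is precisely the standard FTRL argument (telescoping of the regularized objectives plus the per-round strong-convexity stability bound) that the paper itself does not reprove but simply cites from \citet{Orabona19}. The only remark worth making is that $F_t$ is in fact $2$-strongly convex with respect to your weighted norm, so treating it as $1$-strongly convex is slightly lossy --- but it yields exactly the $\tfrac{1}{2}\langle\bgamma_{t-1},\bg_t^2\rangle$ term in the statement, so nothing is lost relative to the claimed bound.
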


\section{Proofs of Corollaries 2-4}
\label{sec:proofs_cor}

First, we state some technical lemmas.

\begin{lemma}{\citep[Lemma~4]{LiO19}}
\label{lemma:smooth}
Let $f: \R^d \to  \R$ be M -smooth and bounded from below, then for all $\bx \in \R^d$
\[
\| \nabla f(\bx) \|^2 \leq 2M (f(\bx) - \inf_{\by \in \R} f(\by))~. 
\]
\end{lemma}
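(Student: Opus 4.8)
The plan is to use the standard descent-lemma argument for smooth functions. First I would invoke the defining inequality of $M$-smoothness: since $\nabla f$ is $M$-Lipschitz, for all $\bx, \by \in \R^d$ we have the quadratic upper bound
\[
f(\by) \leq f(\bx) + \inn{\nabla f(\bx)}{\by - \bx} + \frac{M}{2}\norm{\by - \bx}^2~.
\]
This is the only structural fact I need; it follows from integrating $\nabla f$ along the segment from $\bx$ to $\by$ and applying Cauchy--Schwarz together with the Lipschitz bound on the gradient.

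Next I would minimize the right-hand side over $\by$. The right-hand side is a strongly convex quadratic in $\by$, minimized at $\by^\star = \bx - \frac{1}{M}\nabla f(\bx)$, and plugging this choice in gives
\[
f\!\left(\bx - \tfrac{1}{M}\nabla f(\bx)\right) \leq f(\bx) - \frac{1}{2M}\norm{\nabla f(\bx)}^2~.
\]
Finally, since $f$ is bounded from below, $\inf_{\by \in \R^d} f(\by)$ is finite and is in particular no larger than $f(\bx - \frac{1}{M}\nabla f(\bx))$; combining this with the previous display and rearranging yields $\norm{\nabla f(\bx)}^2 \leq 2M\bigl(f(\bx) - \inf_{\by} f(\by)\bigr)$, as claimed.

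There is essentially no obstacle here: the argument is a textbook computation, and the only mild point worth stating carefully is that boundedness from below is exactly what guarantees the infimum is finite so that the last inequality makes sense. (The statement in the lemma writes $\inf_{\by \in \R}$, but of course the infimum is over $\R^d$; I would keep the displayed form consistent with the rest of the paper.) Since this is cited as \citep[Lemma~4]{LiO19}, I would also note that one may simply refer to that source, but the self-contained three-line proof above is preferable for completeness.
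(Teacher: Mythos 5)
Your proof is correct and is the standard descent-lemma argument; the paper does not reprove this lemma but simply cites \citet[Lemma~4]{LiO19}, whose proof is exactly the computation you give (quadratic upper bound from $M$-smoothness, minimize over $\by$ at $\by = \bx - \tfrac{1}{M}\nabla f(\bx)$, then lower-bound by the infimum). Your side remark that the $\inf_{\by\in\R}$ in the statement should read $\inf_{\by\in\R^d}$ is also a fair catch of a typo.
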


%

\begin{lemma}{\citep[Lemma 14]{GaillardSV14}}
\label{lemma:sum_intergral_bounds_extra_term}
Let $a_0 > 0$ and $a_1, \dots, a_m \in [0,A]$ be real numbers and let $f:(0,+\infty)\rightarrow [0, +\infty)$ nonincreasing function. Then 
\[
\sum_{i=1}^{m} a_i f(a_0 + \dots  + a_{i-1})
\leq  \int_{a_0}^{\sum_{i=0}^m a_i} f(u) du + Af(a_0) ~.
\]
\end{lemma}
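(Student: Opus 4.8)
The plan is to prove the inequality by comparing, block by block, the left Riemann sum $\sum_{i=1}^m a_i f(a_0+\dots+a_{i-1})$ against the integral of $f$ over the induced partition, and then collapsing the total overestimation into a single telescoping term. Throughout, write $s_i = \sum_{j=0}^{i} a_j$ for $i=0,1,\dots,m$, so that $s_0=a_0$, the $i$-th summand on the left-hand side is $a_i f(s_{i-1})$, and the interval $[s_{i-1},s_i]$ has length exactly $a_i$ (a degenerate interval when $a_i=0$, in which case that summand contributes $0$ and can be ignored). Since $a_0>0$ and $f$ is nonincreasing on $(0,+\infty)$, $f$ is integrable on $[a_0,s_m]$, so every integral below is well defined.

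The crux is the per-interval estimate. For each $i\in\{1,\dots,m\}$, monotonicity of $f$ gives $f(s_{i-1})\ge f(u)\ge f(s_i)$ for all $u\in[s_{i-1},s_i]$; integrating over that interval yields
\[
\int_{s_{i-1}}^{s_i} f(u)\,du \;\le\; a_i f(s_{i-1}) \;\le\; \int_{s_{i-1}}^{s_i} f(u)\,du + a_i\bigl(f(s_{i-1})-f(s_i)\bigr).
\]
The second inequality is the one I need: the left-Riemann overestimate on block $i$ is at most $a_i\bigl(f(s_{i-1})-f(s_i)\bigr)$, and since $f(s_{i-1})-f(s_i)\ge 0$ and $a_i\le A$, this overestimate is at most $A\bigl(f(s_{i-1})-f(s_i)\bigr)$.

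Finally I would sum over $i=1,\dots,m$: the integrals concatenate into $\int_{s_0}^{s_m} f(u)\,du=\int_{a_0}^{\sum_{i=0}^m a_i} f(u)\,du$, while the overestimates telescope,
\[
\sum_{i=1}^{m} A\bigl(f(s_{i-1})-f(s_i)\bigr) = A\bigl(f(s_0)-f(s_m)\bigr)\le A f(a_0),
\]
using $f\ge 0$ in the last step. Combining the two displays gives exactly the claimed bound. The only genuine idea is to absorb the step-to-step decrease of $f$, weighted by the step sizes $a_i$ that are each bounded by $A$, into one telescoping quantity $A\bigl(f(a_0)-f(s_m)\bigr)$; that is precisely where the additive $Af(a_0)$ comes from. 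I do not expect a real obstacle here: the argument is routine once the per-block inequality is written down, and the only points needing care are the handling of vanishing $a_i$'s and the integrability of the monotone $f$, both immediate.
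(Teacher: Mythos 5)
Your proof is correct and follows essentially the same route as the paper's: both split $a_i f(s_{i-1})$ into $a_i f(s_i)$ (bounded by the integral over $[s_{i-1},s_i]$ via monotonicity) plus the overshoot $a_i\bigl(f(s_{i-1})-f(s_i)\bigr)$, bound the latter by $A\bigl(f(s_{i-1})-f(s_i)\bigr)$, and telescope using $f\ge 0$. No gaps.
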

\begin{proof}
Denote by $s_t=\sum_{i=0}^{t} a_i$.
\begin{align*}
\sum_{i=1}^{m} a_i f(s_{i-1}) 
& =  \sum_{i=1}^{m} a_i f(s_i)  + \sum_{i=1}^{m} a_i (f(s_{i-1}) - f(s_i))\\
& \leq  \sum_{i=1}^{m} a_i f(s_i)  + A\sum_{i=1}^{m} (f(s_{i-1}) - f(s_i))\\
& \leq  \sum_{i=1}^{m} \int_{s_{i-1}}^{s_i} f(x) d x + A\sum_{i=1}^{m} (f(s_{i-1}) - f(s_i))\\
& \leq \int_{a_0}^{\sum_{i=0}^m a_i} f(u) du + Af(a_0) ~,
\end{align*}
where the first inequality holds because $f(x_{i-1}) \geq f(s_i)$ and $a_i \leq A$, while the second inequality uses the fact that $f$ is nonincreasing together with $s_i - s_{i-1} = a_i$. 
\end{proof}

We can now present the proofs of the Corollaries 2-4.

\begin{proof}[Proof of Corollary~\ref{cor:adaptive_norm} and Corollary~\ref{cor:adaptive_coord}]
By Lemma~\ref{lemma:sum_intergral_bounds_extra_term}, for adaptive stepsize $\bgamma_t =  \frac{\alpha \cdot \mathbf{1}}{\sqrt{\epsilon  + \sum_{i=1}^{t} \alpha_i^2 \| \bg_i \|^2}}$, we have 
\begin{align*}
\sum_{t=1}^{T} \bgamma_{t-1} \| \bg_t \|^2
= \sum_{t=1}^{T} \frac{\alpha\| \bg_t \|^2}{\sqrt{\epsilon  + \sum_{i=1}^{t-1} \| \bg_i \|^2}} 
\leq 2\alpha \sqrt{\sum_{t=1}^{T} \| \bg_t \|^2 } 
+  \frac{\alpha G^2}{\sqrt{\epsilon}}~. 
\end{align*}
Similarly for $ \bgamma_t = \frac{\alpha }{\sqrt{\epsilon  + \sum_{i=1}^{t} \alpha_i^2  \bg_i^2}}$, we have
\begin{align*}
\sum_{t=1}^{T} \langle \bgamma_{t-1},  \bg_t^2 \rangle 
= \sum_{j=1}^{d}\sum_{t=1}^{T} \frac{ \alpha \bg_{t,j}^2}{\sqrt{\epsilon  + \sum_{i=1}^{t-1} \bg_{i,j}^2}} 
\leq 2\alpha \sum_{j=1}^{d} \sqrt{\sum_{t=1}^{T} \bg_{t,j}^2}
+ \frac{\alpha dG_{\infty}^2}{\sqrt{\epsilon}}~. 
\end{align*}
\end{proof}

\begin{proof}[Proof of Corollary~\ref{cor:ada_smooth}]
By Corollary~\ref{cor:adaptive_norm}, we have 
\begin{align}
& \E\left[f(\bx_T) \right] - f^{\star}
\leq \frac{1}{T} \left[\left(\frac{\|\bx_1 - \bx^{\star} \|^2}{\alpha} + 2\alpha\right) \sqrt{\epsilon +  \E \sum_{t=1}^{T} \| \bg_t \|^2 } 
+ \frac{\alpha G^2}{\sqrt{\epsilon}}\right]~.\label{eq:adaptive_bound}
\end{align}
From the unbiasedness of the gradients, we have
\begin{align*}
&  \E \sum_{t=1}^{T} \| \bg_t \|^2 
\leq \E \sum_{t=1}^{T} \| \nabla f(\bx_t) \|^2 + T \sigma^2,
\end{align*}
and 
\begin{align*}
\E \sum_{t=1}^{T} \| \nabla f(\bx_t) \|^2
& \leq 2L \sum_{t=1}^{T} \E \left[f(\bx_t ) \right] - f^{\star}\\
& \leq 2L \left(\frac{\|\bx_1 - \bx^{\star} \|^2}{\alpha} + 2\alpha\right) \sum_{t=1}^{T}\frac{ \sqrt{ \epsilon + \E \sum_{i=1}^{t} \| \bg_i\|^2 }}{t} \\
& \leq 2L \left(\frac{\|\bx_1 - \bx^{\star} \|^2}{\alpha} + 2\alpha\right)\cdot \left(\sqrt{ \E \sum_{t=1}^{T} \| \bg_t\|^2+ \epsilon} + \frac{\alpha G^2}{\sqrt{\epsilon}}\right) \ln T,
\end{align*}
where in the second inequality we used Lemma~\ref{lemma:smooth} and Holder's and Jensen's inequalities in the third inequality.
	
Solve for $\E \sum_{t=1}^{T} \| \bg_t \|^2$ to have
\begin{align*}
& \E \sum_{t=1}^{T} \| \bg_t \|^2 \\
& \leq 4L^2\left(\frac{\|\bx_1 - \bx^{\star} \|^2}{\alpha} + 2\alpha\right)^2 \ln^2 T 
+ 4L\sqrt{\epsilon} \left(\frac{\|\bx_1 - \bx^{\star} \|^2}{\alpha} + 2\alpha\right) \ln T + \frac{2\alpha G^2}{\sqrt{\epsilon}} + 2T \sigma^2~. 
\end{align*}
	
Putting it back to \eqref{eq:adaptive_bound}, we get the stated bound. 
\end{proof}

\section{Details of Experiments}
\label{sec:details_exp}

\begin{table}[h]
\centering
\caption{Real datasets}
\begin{tabular}{ ||c | c | c||  }
\hline
Name & \# of Samples& \# of Features \\ [0.5ex]
\hline
real-sim & 72,309& 20,958 \\
w8a& 49,749 & 300 \\
phishing & 11,055 & 68 \\
\hline
\end{tabular}
\label{table:data}
\end{table}

\end{document}